\listfiles
\documentclass[twoside]{article} 
\usepackage[accepted]{aistats2017}
%
%

\usepackage{graphicx}
\graphicspath{ {images/} }
\usepackage{subfigure}
\usepackage{paralist}
\usepackage{booktabs} 

\usepackage{cite}
\usepackage{algorithm}
\usepackage{algorithmic}
\usepackage{amsmath,amssymb,xspace,amsthm,Tabbing,bm}
\usepackage{color,mathtools}
\usepackage{amsthm}
\usepackage{listings}
\usepackage{multirow}
\usepackage{fullpage}

\usepackage{wrapfig}
\usepackage{tikz}
\usepackage{wasysym}

\newcommand{\punt}[1]{}

\def\argmin{\mathop{\rm arg\,min}}

\newcommand{\reals}{\mathbb{R}}

\def\argmin{\mathop{\rm arg\,min}}









\newcommand{\bq}{\begin{equation}}
\newcommand{\eq}{\end{equation}}
\newcommand{\ba}{\begin{eqnarray}}
\newcommand{\ea}{\end{eqnarray}}

\def\la{{\langle}}
\def\ra{{\rangle}}


\newcommand{\mbb}[1]{\mathbb{#1}}
\newcommand{\mcal}[1]{\mathcal{#1}}
\newcommand{\remove}[1]{}









\DeclareMathOperator{\diam}{diam}



\newcommand{\proposed}{LGIKA}

\newtheorem{theorem}{Theorem}[section]

\newtheorem{lemma}[theorem]{Lemma}

\allowdisplaybreaks[4]

\begin{document}

%

%

\twocolumn[

\aistatstitle{Local Group Invariant Representations via Orbit Embeddings}

\aistatsauthor{\hspace{1mm} Anant Raj \hspace{1mm}  Abhishek Kumar \hspace{1mm} Youssef Mroueh \hspace{1mm} P. Thomas Fletcher \hspace{1mm} Bernhard Sch\"olkopf}

\aistatsaddress{\hspace{-10mm} MPI \hspace{7mm} AIF, IBM Research \hspace{2mm}  AIF, IBM Research \hspace{3mm}  University of Utah \hspace{13mm} MPI } ]


\begin{abstract}
Invariance to nuisance transformations is one of the desirable properties
of effective representations. We consider transformations that form a \emph{group}
and propose an approach based on kernel methods to derive local group invariant representations. 
Locality is achieved by defining a suitable probability distribution over 
the group which in turn induces distributions in the input feature space.  
We learn a decision function over these distributions by appealing to the powerful framework of kernel methods
and generate local invariant random feature maps via kernel approximations.
We show uniform convergence bounds for kernel approximation and provide generalization bounds for learning
with these features. 
We evaluate our method on three real datasets, including Rotated MNIST and CIFAR-10, and 
observe that it outperforms competing kernel based approaches. 
The proposed method also outperforms deep CNN on Rotated-MNIST and performs 
comparably to the recently proposed group-equivariant CNN.

\end{abstract}
 
\vspace{-7mm}
\section{Introduction}
\label{sec:intro}
\vspace{-2mm}
Effective representation of data plays a key role in the success of learning algorithms. 
One of the most desirable properties of effective representations is being invariant to nuisance transformations.
For instance, convolutional neural networks (CNNs) owe much of their empirical success to their ability in 
capturing local translation invariance through convolutional weight sharing and pooling which turns out to be a useful model prior for images. 
Capturing class sensitive invariance can also result in reduction in sample complexity~\cite{poggio2014inv} 
which is particularly useful in label scarce applications. 
We approach the problem of learning with invariant representations from a group theoretical perspective and 
propose a scalable framework for incorporating invariance to nuisance group actions via kernel methods. 

At an abstract level, a \emph{group} is defined as a set $G$ endowed with a notion of \emph{product} on its elements that
satisfies certain axioms of (i) \emph{closure}: $a,b\in G \Longrightarrow$ the product $ab\in G$, 
(ii) \emph{associativity}: $(ab)c = a(bc)$, and (iii) \emph{inverse element}: for each $g\in G, \exists g^{-1}\in G$
such that $gg^{-1} = g^{-1}g = e \in G$, where $e$ is the identity element satisfying $ge=eg=g, \forall\,g\in G$.
A group is \emph{abelian} if the group product is commutative ($gh = hg, \forall\, g,h\in G$).   
For most practical applications each element $g\in G$ can be seen as a transformation acting on an input space $X$, $T_g: X\mapsto X$. 
The \emph{orbit} of an element $x\in X$ under the action of the group $G$ is defined as the set $O_x = \{T_g(x) \mid g\in G\}$. 
The set of all rotations in a fixed 2-D plane is an example of an infinite group where the product is defined as the consecutive application
of two rotations. The orbit of an image under this rotation group is the infinite set consisting of all rotated versions of the image. 
The closure property of the group implies that the orbit of a point $x$ is invariant under a group action on $x$, i.e., 
$O_{x\mid G} = O_{T_g(x)}, \forall g\in G$. 
The reader is referred to \cite{introgrouptheory} for a more detailed introduction to group theory. 

For unimodular groups, which include compact groups and abelian groups,
there exists a so called unique (up to scaling) \emph{Haar measure} $\nu$ that is invariant to 
both left and right group products, 
i.e., $\nu(S)=\nu(gS)=\nu(Sg)$
for all measurable subsets $S\subset G$ and all $g\in G$, essentially generalizing the notion of Lebesgue measure to groups. 
For a compact group $G$, Haar measure can be normalized by $\nu(G)$ (since $\nu(G)<\infty$) to obtain the normalized Haar measure
which assigns a probability mass to all measurable subsets of $G$. Normalized Haar measure can be seen
as inducing a uniform probability distribution on the group. Recently, Anselmi et al. \cite{poggio2014inv} used the normalized Haar measure
$\tilde{\nu}$ on the group to map each orbit ($O_{x\mid G} \forall x$) to a probability distribution $P_x$ on the input space, i.e., 
$P_{x\mid G}(A)=\tilde{\nu}(\{g \mid T_g(x)\in A\}), \forall A\subset X$. The distribution $P_{x\mid G}$ induced by each point $x$ can be taken
as its invariant representation. However, estimating this distribution directly can be challenging due to its potentially high dimensional support. 
Anselmi et al. \cite{poggio2014inv} propose to capture histogram statistics of 1-dimensional projections of $P_{x\mid G}$
to generate an invariant representation that can be used for learning, i.e., 
$\phi_n^k(x)=1/|G| \sum_{g\in G} \eta_n(\langle T_g(x), t_k\rangle)$ for a finite group $G$, where $t_k$ are the projection directions 
(termed as \emph{templates}), $\eta_n(\cdot)$ are some nonlinear functions that are expected to capture the histogram statistics. More recently, 
Mroueh et al. \cite{mroueh2015} analyzed the concentration properties of the linear kernel defined over these features
and provided generalization bounds for learning with this linear kernel. 

Our point of departure from \cite{poggio2014inv,mroueh2015} is the observation that histogram based features
may not be the optimal way to characterize the probability distributions $P_x$ induced by the group on the input space and 
their approach has its limitations. 
First, there is no principled guidance provided regarding the choice of nonlinearities $\eta_n$. 
Second, the inner-product of histogram based features ($\{\phi_n^k(x)\}$) 
approximately 
induces a Euclidean distance (group-averaged) in the input space \cite{mroueh2015} which may render them unsuitable for 
learning complex nonlinear decision boundaries in the input space. Further, \emph{locality} is achieved by
restricting the uniform distribution to a chosen subset of the group (i.e. elements within the subset are allowed to transform the input
with equal probability and elements outside the subset are prohibited) which can be limiting. 

{\bf Contributions:} In this paper, we address aforementioned points and propose a framework to generate invariant representations by embedding the orbit distributions $P_{x\mid G}$
into a reproducing kernel Hilbert space (RKHS)~\cite{kme2007smola,kme2016survey}. 
We propose to use \emph{characteristic} kernels \cite{injective2007bharat} so that the resulting map from the
distributions to the RKHS is injective (one-to-one), preserving all the moments of the distribution. 
Our use of kernel methods to embed orbit distributions also renders a large body of work on kernel approximation methods
at our disposal, which enable us to scale our proposed method. In particular, we derive invariant features by 
approximating the kernel using Nystr\"{o}m method \cite{nystrom2001,nystrom2005} and 
random Fourier features (for shift invariant kernels) \cite{randomfourier07}. 
The nonlinearities in the features ($\eta_n(\cdot)$) emerge in a principled manner as a by-product of the kernel approximation. 
The RKHS embedding framework also naturally allows us to use more general probability distributions on the group, apart from the uniform
distribution. This allows us to have better control over selectivity of the derived features and also 
becomes a technical necessity when the group in non-compact. 
We experiment with three real datasets and observe consistent accuracy improvements over baseline random Fourier \cite{randomfourier07}
and Nystr\"{o}m features \cite{nystrom2005} as well as over \cite{mroueh2015}. 
Further, on Rotated MNIST dataset \cite{larochelle2007empirical} we outperform recent invariant deep CNN and RBM based architectures
\cite{sohnlee12,schmidt12}, and perform comparably to the more recently proposed group equivariant deep convolutional nets \cite{groupequiconvnets}.

\vspace{-3mm}
\section{Formulation}
\label{sec:formulation}
\vspace{-2mm}
Let the input features belong to a set $X\subset \reals^d$. 
A group element $g\in G$ \emph{acts} on points from $\reals^d$ through a map $T_g: \reals^d \mapsto \reals^d$,
and we use a shorthand notation of $gx$ to denote $T_g(x)$. We use $gS$ to denote the action
of a group element $g$ on the set $S$, i.e., $gS = \{T_g(x) \mid x \in S\subseteq X\}$. We take liberty in using the same notation
to denote the product of a group element with a subset of the group, i.e., $gS = \{gh \mid h \in S\subset G\}$
and $Sg = \{hg \mid h \in S\subset G\}$. 

\vspace{-1mm}
\subsection{RKHS embedding of Orbit distributions}
\label{subsec:orbitembed}
\vspace{-1mm}
As introduced in the previous section, the \emph{orbit} of an element $x\in X$ under the action of the group $G$ 
is defined as the set $O_{x\mid G} = \{gx \mid g\in G\}$. 
For all unimodular groups there exists a Haar measure
$\nu: S\mapsto \reals_+$ which is invariant under left and right 
group product 
i.e., $\nu(S)=\nu(gS)=\nu(Sg)$ for all measurable subsets $S\subset G$ and all $g\in G$. 
Let $q(\cdot)$ be the probability density function of a distribution defined over $G$.  
This probability distribution over the group can be used to map each orbit $O_{x\mid G}$ to a 
probability distribution $P_{x\mid G}$ on the input space, i.e., $P_{x\mid G}(A)=\int_{g:gx\in A} q(g)\, d \nu(g)\,\, \forall A\subset X$.
Note that $P_{x\mid G}(O_{x\mid G}) = 1$ (for an appropriately normalized measure $\nu$), 
and $P_{x\mid G}(A)=0\,\forall\, A$ for which $A\cap O_{x\mid G} =\emptyset$. 

Let $\mcal{H}$ be a reproducing kernel Hilbert space (RKHS) of functions $f:X\mapsto \reals$ induced by kernel $k:X\times X \mapsto \reals$, 
with the inner-product satisfying the reproducing property, i.e., $\la f,k(x,\cdot)\ra = f(x), \forall f\in\mcal{H}$ and
$\la k(x,\cdot),k(x',\cdot)\ra = k(x,x')$. 
The RKHS embedding of the distribution $P_{x\mid G}$ is given as~\cite{kme2007smola}
\begin{align}
\mu[P_{x\mid G}] := E_{z\sim P_{x\mid G}} k(z,\cdot). 
\label{eq:ke_def}
\end{align}
This expectation is well-defined under the probability measure $P_{x\mid G}$, which is in turn induced by the measure $\nu$ over the group. The support of $P_{x\mid G}$ is $O_{x\mid G}$ and sampling a point $z\sim P_{x\mid G}$ 
is equivalent to sampling the corresponding group element $g$ and setting $z = gx$. 
Thus we can rewrite the RKHS embedding of Eq.~\ref{eq:ke_def} as
\begin{align}
\mu[P_{x\mid G}] = \int_G k(gx,\cdot) q(g)\, d \nu(g). 
\end{align}
If the kernel is characteristic this map from distributions to the RKHS is injective, preserving all the information
about the distribution \cite{injective2007bharat}. All universal kernels~\cite{steinwart01} are characteristic when the support set
of the distribution is compact~\cite{kme2007smola}. 
In addition, many shift invariant kernels (e.g., Gaussian and Laplacian kernels) are characteristic 
on all of $\reals^d$~\cite{fukumizu2007kernel}.  For precise characterization of characteristic shift invariant kernels, please refer to \cite{sriperumbudur2010hilbert}.

For a characteristic kernel the embedding $\mu[P_{x\mid G}]$ 
can be used as a proxy for $P_{x\mid G}$ in learning problems. To this end, we introduce a hyperkernel
$h:\mcal{H}\times\mcal{H}\mapsto \reals$ that defines the similarity between the RKHS embeddings 
corresponding to two points $x$ and $x'$ as $k_{q,G}(x,x') :=h(\mu[P_{x\mid G}], \mu[P_{x'\mid G}])$. If we take $h$ to be
the linear kernel which is the regular inner-product in $\mcal{H}$, we obtain
\begin{align}
\begin{split}
k_{q,G}(x,x') &:= \la\, \mu[P_{x\mid G}], \mu[P_{x'\mid G}]\, \ra_\mcal{H} \\
&= \int_G \int_G k(gx,g'x') q(g) q(g') d \nu(g) d \nu(g') 
\end{split}
\label{eq:localhaar}
\end{align}
The kernel $k_{q,G}:X\times X\mapsto \reals$ turns out to be the expectation of the \emph{base kernel} $k(\cdot,\cdot)$ 
under the predefined probability distribution on the group $G$. 
It trades off locality and group invariance through appropriately selecting the
probability density $q(\cdot)$. Taking $q$ to be a delta function over the Identity group element gives back
the original base kernel $k(\cdot,\cdot)$ which does not capture any invariance. 
On the other hand, if we take $q$ to be the uniform probability density, 
we get the global group invariant kernel (also termed as \emph{Haar integration kernel}~\cite{haarintkernels,mroueh2015})
\begin{align}
k_G(x,x') = \int_G \int_G k(gx,g'x') d \nu(g) d \nu(g'), 
\label{eq:haar}
\end{align}
\noindent satisfying the property $k_G(gx,g'x') = k_G(x,x')$ for any $g,g'\in G$ and any $x,x'\in X$.
Haar integral kernel does not preserve any locality information (e.g., images of digits $6$ and $9$ will
be placed under same equivalence class). 
Strictly speaking, we only need $\nu$ to be the normalized \emph{right Haar measure} satisfying $\nu(S)=\nu(Sg),\,\forall\,S\subset G,\,\forall\,g\in G$
for the global group invariance property to hold. 
A unique (up to scaling) right Haar measure exists for all locally compact groups and for all
unimodular groups (for which left and right Haar measures conincide)~\cite{introgrouptheory}.
All Lie groups (e.g., rotation, translation, scaling, affine) are locally compact.
Additionally, all compact groups (e.g., rotation), abelian groups (e.g., translation, scaling), and 
discrete groups (e.g., permutation) are unimodular. 
However, the Haar integration kernel $k_G(x,x')$ of Eq.~\ref{eq:haar} can only be defined for compact groups 
since we need $\nu(G)<\infty$ to keep the integral finite. 
Indeed, earlier work has used Haar integration kernel for compact groups~\cite{haarintkernels,mroueh2015} (however, without the RKHS embedding
perspective provided in our work which motivates the use of a  \emph{characteristic} base kernel $k(\cdot,\cdot)$).

A framework allowing more general (non-uniform) probability distribution on the group serves two purposes: 
(i) It enables us to operate
with non-compact groups in a principled manner since we only need $\int_G q(g) d \nu(g) < \infty$ 
to enable construction of kernels such that Eq.~\ref{eq:localhaar} is finite;
(ii) It allows for a better control over locality of the kernel $k_{q,G}(\cdot,\cdot)$. 
Earlier work~\cite{poggio2014inv,mroueh2015} achieves locality by taking a subset $G_0 \subset G$ 
and restricting the domain of the Haar integration kernel to be $G_0$ which amounts to having a uniform distribution over $G_0$. 
A more general non-uniform distribution (e.g., a unimodal distribution with mode at the Identity element of the group)
allows us to smoothly decrease the probability of sampling more extreme group transformations rather than abruptly
prohibiting group transforms falling outside a preselected subset. 

\vspace{1mm}
\subsection{Feature generation via kernel approximation}
\label{subsec:kerapprox}
The kernel $k_{q,G}$ of Eq.~\ref{eq:localhaar} can be used for learning with kernel machines \cite{learningkernelsbook},
probabilistically trading off locality and group invariance through appropriately selecting $q(\cdot)$. 
However, kernel based learning algorithms suffer from scalability issues due to the need to compute
kernel values for all pairs of data points. In this section, we describe our approach to obtain local invariant 
features via approximating $k_{q,G}$.

\subsubsection{Features using random Fourier approximation}
\label{subsubsec:fourier}
We first consider the case of shift-invariant base kernel satisfying $k(x,x') = \tilde{k}(x-x')$ which
is a commonly used class of kernels that includes Gaussian and Laplacian kernels. Many shift-invariant kernels
are characteristic on $\reals^d$ as mentioned in the previous section. 
We use the random Fourier features proposed in \cite{randomfourier07} that are based on the 
characterization of positive definite functions by Bochner \cite{bochner33,rudin2011fourier}. 
Bochner's theorem establishes Fourier transform as a \emph{bijective map} from finite non-negative Borel measures on $\reals^d$
to positive definite functions on $\reals^d$. Applying it to shift-invariant positive definite kernels
one gets 
\begin{align}
k(x,x')=\tilde{k}(x-x') =\int_{\reals^d} e^{-i (x-x')^\top \omega} p(\omega) d \omega,\,\forall\, x,x',
\label{eq:bochner}
\end{align}
\noindent where $p(\cdot)$ is the unique probability distribution corresponding to the kernel $k(\cdot,\cdot)$,
assuming the kernel is properly scaled. We use this characterization to obtain local group invariant features
as follows:
\begin{align*}
& k_{q,G}(x,x') \\
&= \int_G \int_G E_{\omega\sim p} \left[e^{-i (gx-g'x')^\top \omega}\right]  q(g) q(g') d\nu(g) d\nu(g') \\
&= E_{\omega\sim p}  \int_G \int_G e^{-i (gx-g'x')^\top \omega} q(g) q(g') d\nu(g) d\nu(g') \\
&= E_{\omega\sim p}  \int_G e^{-i \la\omega, gx\ra} q(g) d\nu(g) \int_G e^{i \la\omega, g'x'\ra} q(g') d\nu(g')  \\
&\approx E_{\omega\sim p} \frac{1}{r^2} \sum_{k=1}^r e^{-i \la\omega, g_k x\ra} \sum_{k=1}^r e^{i \la\omega, g_k x\ra}, \quad (g_k \sim q) \\
&\approx \frac{1}{s r^2} \sum_{j=1}^s \sum_{k=1}^r e^{-i \la\omega_j, g_k x\ra} \sum_{k=1}^r e^{i \la\omega_j, g_k x\ra}, \,\, (g_k \sim q, \omega_j\sim p) \\
&:= \langle \psi_{RF}(x), \psi_{RF}(x')\rangle_{\mbb{C}^s}, \stepcounter{equation}\tag{\theequation}\label{eq:rf_feats}
\end{align*}
\noindent where 
\begin{align}
\begin{split}
\psi_{RF}(x) &= \frac{1}{r\sqrt{s}} \left[ \sum_{k=1}^r e^{-i \la\omega_1, g_k x\ra} \ldots \sum_{k=1}^r e^{-i \la\omega_s, g_k x\ra} \right] \\
& \in\mbb{C}^s.
\end{split}
\label{eq:rf_feats1}
\end{align}
We use standard Monte Carlo to approximate both inner integral over the group and the outer expectation over $\omega$.
It is also possible to use quasi Monte Carlo approximation for the expectation over $\omega$, 
which has been carefully studied for random Fourier features~\cite{quasimc2014}.
We provide uniform convergence bounds and excess risk bounds for these features in Section \ref{sec:theory}.  

The feature map $\psi_{RF}(\cdot)$ requires us to apply $r$ group actions to every data point which can be expensive
in large data regime. If the group action is unitary transformation preserving norms and distances between points 
(i.e., $\lVert gx\rVert_2 = \lVert x\rVert_2$), the inner product satisfies $\la x,x'\ra = \la gx, gx'\ra$.
This can be used to transfer the group action from the data to the sampled template 
as $\la \omega,gx\ra = \la g^{-1}\omega, g^{-1}gx\ra = \la g^{-1}\omega, x\ra$ \cite{poggio2014inv}
without affecting the approximation of kernel $k_{q,G}$, as long as the pdf $q$ is symmetric around the identity element
($q(g)=q(g^{-1})\, \forall g\in G$).
For instance, in the case of images which can be viewed as a function $I:\reals^2\mapsto\reals$, 
one can show the following result\footnote{This is mentioned in \cite{poggio2014inv}
as a remark without a formal proof. We provide a proof in the appendix for completeness.}
regarding group actions (e.g., rotation, translation, scaling, affine transformation).
\begin{lemma}
Let $g$ be a group element acting on an image $I:\reals^2\mapsto\reals$. 
The group action defined as $T_g [I(x)] = \lvert J_g\rvert^{-1/2} I(g^{-1} x),\,\forall\, x$, where $J_g$ is the
Jacobian of the transformation, is a unitary transformation
and satisfies $\la T_g(I), T_g(I')\ra = \la I,I'\ra$. 
\label{lem:unitary}
\end{lemma}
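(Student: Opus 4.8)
The plan is to work with the $L^2$ inner product on images, $\la I, I'\ra = \int_{\reals^2} I(x)\, I'(x)\, dx$, and to verify the two properties that characterize a unitary operator: preservation of the inner product, and invertibility. Preservation of the inner product is the crux of the argument, and it reduces to a single change-of-variables computation. Invertibility, together with linearity (which is immediate from the definition of $T_g$), then upgrades inner-product preservation to unitarity.

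First I would substitute the definition $T_g [I](x) = \lvert J_g\rvert^{-1/2} I(g^{-1} x)$ into the inner product and pull out the constant prefactor:
\begin{align*}
\la T_g(I), T_g(I')\ra &= \int_{\reals^2} \lvert J_g\rvert^{-1/2} I(g^{-1}x)\, \lvert J_g\rvert^{-1/2} I'(g^{-1}x)\, dx \\
&= \lvert J_g\rvert^{-1} \int_{\reals^2} I(g^{-1}x)\, I'(g^{-1}x)\, dx.
\end{align*}
Then I would apply the change of variables $y = g^{-1}x$, i.e. $x = gy$, whose Jacobian determinant is exactly $J_g$, so that $dx = \lvert J_g\rvert\, dy$. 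The remaining integral becomes $\lvert J_g\rvert \int_{\reals^2} I(y)\, I'(y)\, dy$, and the factor $\lvert J_g\rvert$ cancels the prefactor $\lvert J_g\rvert^{-1}$, leaving exactly $\la I, I'\ra$. The normalization $\lvert J_g\rvert^{-1/2}$ in the definition of $T_g$ is chosen precisely so that the two half-powers combine to absorb the Jacobian picked up from the substitution.

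To finish, I would observe that $T_g$ is linear in $I$ and admits a two-sided inverse $T_{g^{-1}}$: since $g \circ g^{-1}$ is the identity map, its Jacobian is $1$, whence $\lvert J_{g^{-1}}\rvert = \lvert J_g\rvert^{-1}$, and a direct substitution shows $T_{g^{-1}} T_g = T_g T_{g^{-1}} = \mathrm{Id}$. A linear bijection of $L^2(\reals^2)$ that preserves the inner product is by definition unitary, which gives the claim. Taking $I' = I$ in the inner-product identity additionally recovers norm preservation, $\lVert T_g(I)\rVert_2 = \lVert I\rVert_2$.

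The main thing to be careful about is the bookkeeping of the Jacobian: one must match the direction of the substitution ($x = gy$, contributing $\lvert J_g\rvert$) to the normalization constant, and one must ensure the change-of-variables formula applies, i.e. that $g$ acts as an invertible diffeomorphism of $\reals^2$. For the transformations of interest (rotation, translation, scaling, affine) the action is an invertible affine map whose Jacobian is a nonzero constant independent of position, so $\lvert J_g\rvert$ is a genuine constant and the computation is clean; for rotations and translations it is even equal to $1$. The only subtlety worth flagging is that if $J_g$ varied with $x$ one would have to evaluate it at the transformed point, but this complication does not arise for the affine families considered here.
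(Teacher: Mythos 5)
Your proposal is correct and follows essentially the same route as the paper: the same change of variables $x = gy$ with $dx = \lvert J_g\rvert\, dy$, where the $\lvert J_g\rvert^{-1/2}$ normalization cancels the Jacobian. If anything, yours is slightly more complete, since you verify inner-product preservation directly (the paper only computes the norm and asserts the rest) and you also check linearity and invertibility via $T_{g^{-1}}$, which are needed to call the map unitary rather than merely an isometry.
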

\begin{proof}
See appendix.
\end{proof}
The lemma suggests scaling the pixel intensities of the image by a factor $\lvert J_g\rvert^{-1/2}$
to make the group action unitary. The Jacobian for rotating or translating an image has determinant $1$
obviating the need for scaling. For general affine transformation, we need to scale the pixel
intensities accordingly to keep it unitary\footnote{The Jacobian for affine transformation $T(x)=Ax+b$ 
is its linear component $A$.}.

\vspace{1mm}
\subsubsection{Features using Nystr\"{o}m approximation}
\label{subsubsec:nystrom}
\vspace{1mm}
Here we consider the case of a general base kernel and derive local group invariant 
features using Nystr\"{o}m approximation \cite{nystrom2001,nystrom2005}. 
Nystr\"{o}m method starts with identifying a set of \emph{landmark points} (also referred as \emph{templates}) $Z=\{z_1,\ldots,z_s\}$ 
and approximates each function $f\in\mcal{H}$ by its orthogonal projection
onto the subspace spanned by $\{k(\cdot,z_i)\}_{i=1}^s$. Several schemes for identifying
the landmark points have been studied in the literature, including random sampling,
sampling based on leverage scores, and clustering based landmark selection \cite{nystrom2012sampling,gittens2013nystrom}.
We can choose landmarks from the original set $X$ or from the orbit $gX$.
Nystr\"{o}m method approximates the kernel as $k(x,x')\approx K_{Z,x}^\top K_{Z,Z}^{+} K_{Z,x'}$, where
$K_{Z,x}=[k(x,z_1),\ldots,k(x,z_s)]^\top$ and $K_{Z,Z}$ is square kernel matrix for the landmark points
with $K_{Z,Z}^+$ denoting the pseudo-inverse. 

Since $K_{Z,Z}$ is a positive semi-definite matrix, let $K_{Z,Z}^+ = L^\top L$, where $L\in\reals^{\text{rank}(K_{Z,Z})\times s}$. 
We have
\begin{align*}
& k_{q,G}(x,x') \\
&\approx \int_G \int_G K_{gx,Z} K_{Z,Z}^+ K_{Z,g'x'}\, q(g) q(g') d\nu(g) d\nu(g') \\
&= \int_G \int_G K_{gx,Z} L^\top L K_{Z,g'x'}\, q(g) q(g') d\nu(g) d\nu(g') \\
&= \left\langle \int_G  L K_{Z,gx}\, q(g) d\nu(g), \int_G  L K_{Z,g x'}\, q(g) d\nu(g) \right\rangle \\
&\approx \left\langle L \frac{1}{r}\sum_{k=1}^r  K_{Z,g_k x}\, , L \frac{1}{r}\sum_{k=1}^r K_{Z,g_k x'} \right\rangle , \quad (g_k \sim q),
\end{align*}

\noindent where the features are given by 
\begin{align}
\psi_{Nys}(x) = \frac{1}{r} L \sum_{k=1}^r K_{Z,g_k x}\,\, \in \reals^{\text{rank}(K_{Z,Z})}.
\end{align}
If the base kernel satisfies $k(gx,gx')=k(x,x'),\,\forall\, g,x,x'$, we can transfer the group action from the data points
to the landmark points as $k(gx,z)=k(g^{-1}gx,g^{-1}z) = k(x,g^{-1}z)$
without affecting the Nystr\"{o}m approximation of $k_{q,G}$, as long as the pdf $q$ is symmetric around the identity element
($q(g)=q(g^{-1})\, \forall g\in G$).
This becomes essential in large data regime where the number of data points is much larger than the number of landmarks. 
For the group action defined in Lemma~\ref{lem:unitary}, 
all dot product kernels ($\tilde{k}(\la x,x'\ra)$) and shift invariant kernels ($\tilde{k}(\lVert x-x'\rVert_2)$)
satisfy this property. 

\noindent {\bf Remarks:} \\
{\bf (1)} Earlier work \cite{poggio2014inv,mroueh2015} 
has proposed features of the form $\phi_n^k(x)=1/r \sum_{j=1}^r \eta_n(\langle g_j x, \omega_k\rangle)$ where $\eta_n(\cdot)$
were taken to be step functions $\eta_n(a)={1}(a<h_n)$  
with preselected thresholds $h_n$.  
Nonlinearities in our proposed local invariant features 
emerge naturally as a result of kernel approximation, with $\eta(x,\omega) = e^{-i\la x,\omega\ra}$ for
$\psi_{RF}$ and $\eta(x,\omega) = k(x,\omega)$ for $\psi_{Nys}$. \\
{\bf (2)} Our work can also be viewed as incorporating local group invariance in widely used random Fourier
and Nystr\"{o}m approximation methods, however this viewpoint overlooks the Hilbert space embedding perspective motivated in this work. \\
{\bf (3)} The kernel $k_{q,G}$ defined in Eq.~\eqref{eq:localhaar} assumes a linear hyperkernel $h:\mcal{H}\times\mcal{H}\mapsto\reals$
over RKHS embeddings of orbit distributions. It is also possible to use a nonlinear hyperkernel along the lines of 
\cite{christmann2010universal} and \cite{muandet2012learning}, and approximate it using a second
layer of random Fourier (RF) or Nystr\"{o}m features. We show empirical results for both linear and Gaussian hyperkernel 
(approximated using RF features) in Sec.~\ref{sec:emp_eval}. \\
{\bf (4)} {\bf Computational aspects.} The complexity of feature computation is $rC_f + rsC_g$ where $C_f$ is the cost of computing the 
vanilla random Fourier or vanilla Nystr\"{o}m features and $C_g$ is the cost of computing a group action on 
a \emph{template} $\omega$. However same set of templates are used for all data points so group actions 
on the templates can be computed in advance. Structured random Gaussian templates can also be used in our framework to
speed up the computation of random Fourier features $\psi_{RF}$ \cite{le2013fastfood,choromanski2016recycling,bojarski2016structured}. 
Recent approaches for scaling randomized kernel machines to massive data sizes and very large number of 
random features can also be used \cite{avron2015high}.  \\


\section{Theory} \label{sec:theory}

In this section we focus on local  invariance  learning  using the random feature map $\psi_{RF}$ defined in Section \ref{subsubsec:fourier} for the Gaussian base kernel $k(\cdot,\cdot)$. 
We first address the uniform convergence of the random feature map $\psi_{RF}$ to  the local invariant kernel $k_{q,G}$ on a set of points $\mathcal{M}$. 
In other words we show in Theorem \ref{thm:ker_approx_fourier}  that for a sufficiently large number of random templates $s$, and group element samples $r$,   we have $\langle \psi_{RF}(x),\psi_{RF}(y)\rangle \approx k_{q,G}(x,y)$, for all points $x,y \in \mathcal{M}$.
Second we consider a supervised binary classification setting, and study  generalization bounds of learning a linear classifier in the local invariant random feature space $\psi_{RF}$. In a nutshell Theorem \ref{theo:GBounds} shows that linear functions in the random feature space $\langle w,\psi_{RF}(x)\rangle$, approximate functions in the RKHS induced by our local invariant kernel $k_{q,G}$.      

\subsection{Uniform Convergence}
Theorem \ref{thm:ker_approx_fourier}  provides a uniform convergence bound of our invariant random feature map $\psi_{RF}$ for Gaussian base kernel $k(\cdot,\cdot)$.
\begin{theorem}[Uniform convergence of Fourier Approximation] \label{thm:ker_approx_fourier}
 Let $X$ be a compact space in $\mathbb{R}^d$ with diameter $\diam(X)$. 
 For $\varepsilon>0, \delta_1,\delta _2 \in (0,1)$,  the following uniform convergence bound holds with probability $1- \Big(\frac{64(d+1)}{\varepsilon^2 \sigma^2}\Big)^{\frac{d}{d+1}}(\delta_1+\delta_2)^{\frac{2d}{d+1}}.$
  \begin{align*}
 \underset{x,y \in X}{\sup}\Big |\Big\langle \psi_{RF}(x),\psi_{RF}(y) \Big \rangle- K_{q,G}(x,y)\Big| \leq \varepsilon + \frac{1}{r}
 \end{align*}
for a number of group samples $$r \geq  C_1\frac{d}{\varepsilon^2} \log(\rm{diam}(X)/\delta_1),$$
and a number of random templates
$$s \geq C_2 \frac{d}{\varepsilon^2}\log(\rm{diam}(X)/\delta_2),$$
 where $\sigma_p^2 \equiv E_p[\omega^\top \omega] = d/\sigma^2$ is the second moment of the Fourier transform of the Gaussian base kernel $k$, and $C_1$ and $C_2$ are numeric universal constants. \\
\end{theorem}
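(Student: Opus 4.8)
The plan is to write the feature inner product as a two-stage Monte Carlo estimate and to separate a genuine $O(1/r)$ bias from zero-mean fluctuations that are then controlled uniformly over $X\times X$ by a covering argument. Writing $\hat{F}(\omega,x) = \frac{1}{r}\sum_{k=1}^r e^{-i\la\omega,g_k x\ra}$, we have $\la\psi_{RF}(x),\psi_{RF}(y)\ra = \frac{1}{s}\sum_{j=1}^s \hat{F}(\omega_j,x)\overline{\hat{F}(\omega_j,y)}$, while by Bochner $k_{q,G}(x,y)=E_{\omega\sim p}[F(\omega,x)\overline{F(\omega,y)}]$ with $F(\omega,x)=\int_G e^{-i\la\omega,gx\ra}q(g)\,d\nu(g)$. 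First I would introduce the intermediate (random, $\{g_k\}$-dependent) kernel $\hat{k}(x,y):=E_{\omega\sim p}[\hat{F}(\omega,x)\overline{\hat{F}(\omega,y)}] = \frac{1}{r^2}\sum_{k,l}\tilde{k}(g_k x - g_l y)$ and split the error as $\la\psi_{RF}(x),\psi_{RF}(y)\ra - k_{q,G}(x,y) = \big[\tfrac{1}{s}\sum_j \hat{F}(\omega_j,x)\overline{\hat{F}(\omega_j,y)} - \hat{k}(x,y)\big] + \big[\hat{k}(x,y)-k_{q,G}(x,y)\big]$, the first bracket being the template ($s$-sample) error and the second the group ($r$-sample) error.

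The additive $1/r$ in the bound comes entirely from the diagonal of the group double-sum. Taking expectation over the $g_k\sim q$ and splitting $\sum_{k,l}$ into $k\neq l$ and $k=l$ gives $E_g[\hat{k}(x,y)] = \frac{r-1}{r}k_{q,G}(x,y) + \frac{1}{r}E_g[k(gx,gy)]$, so the bias is $E_g[\hat{k}(x,y)]-k_{q,G}(x,y) = \frac{1}{r}\big(E_g[k(gx,gy)] - k_{q,G}(x,y)\big)$, which is at most $1/r$ in absolute value because the Gaussian base kernel takes values in $(0,1]$. This accounts for the $1/r$ term and reduces the problem to bounding the two zero-mean fluctuations $\frac{1}{s}\sum_j \hat{F}(\omega_j,x)\overline{\hat{F}(\omega_j,y)}-\hat{k}(x,y)$ and $\hat{k}(x,y)-E_g[\hat{k}(x,y)]$ uniformly over $x,y\in X$ by $\varepsilon$ (say $\varepsilon/2$ each).

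For each fluctuation I would run the Rahimi--Recht covering argument on the compact domain $X\times X\subset\reals^{2d}$. Conditioning on $\{g_k\}$, the summands $\hat{F}(\omega_j,x)\overline{\hat{F}(\omega_j,y)}$ are i.i.d. across $j$ and bounded by $1$, so Hoeffding controls the pointwise deviation; for the group term the summands are bounded by $1$ and depend on the $r$ independent draws $g_k$, giving pointwise concentration in $r$. To upgrade pointwise to uniform I cover $X\times X$ by balls of radius $\rho$ (a net of size $\le (C\,\diam(X)/\rho)^{2d}$), apply a union bound at the anchors, and bound the oscillation of each error function inside a ball by its Lipschitz constant; the gradient in $x$ brings down a factor $\norm{\omega_j}$ (for the template term) or the derivative of the Gaussian $\tilde{k}$ (for the group term), whose size is governed by the second moment $\sigma_p^2=E_p[\omega^\top\omega]=d/\sigma^2$ and by the boundedness of the group Jacobian (the norm-preserving actions of Lemma~\ref{lem:unitary}). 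Balancing the covering-number term against the Lipschitz term by optimizing $\rho$ as a power of $\varepsilon$ produces the exponent $d/(d+1)$ and the stated failure probability $\big(\tfrac{64(d+1)}{\varepsilon^2\sigma^2}\big)^{d/(d+1)}(\delta_1+\delta_2)^{2d/(d+1)}$, while solving for when this is small yields the sample sizes $r\ge C_1\frac{d}{\varepsilon^2}\log(\diam(X)/\delta_1)$ and $s\ge C_2\frac{d}{\varepsilon^2}\log(\diam(X)/\delta_2)$.

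The main obstacle is the uniform control of the two nested approximations simultaneously: the $g_k$ are shared across all $s$ template coordinates, so the template term must be handled conditionally on $\{g_k\}$ and then combined with the group term, and the optimization of the net radius must be carried out so that the two Lipschitz/second-moment dependences collapse into the single factor $\sigma^2$ and the exponent $d/(d+1)$. Care is also needed to cleanly separate the true $1/r$ bias (which does not vanish under refinement of the net) from the fluctuations, and to verify that the Lipschitz bounds hold uniformly, which is precisely where the norm-preserving group action assumption is used.
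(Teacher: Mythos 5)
Your proposal is correct and follows essentially the same route as the paper's proof: Hoeffding for the template average (conditionally on the shared group samples), bounded-difference concentration for the group double average, an isolated deterministic $1/r$ diagonal term, and the Rahimi--Recht covering argument with Markov-inequality control of the Lipschitz constant (using $\sigma_p^2 = d/\sigma^2$ and unitarity of the group action), finally optimizing the net radius to produce the $d/(d+1)$ exponent. The only immaterial difference is bookkeeping: you extract the $1/r$ term as the bias $E_g[\hat{k}(x,y)]-k_{q,G}(x,y)$ of the full double sum and apply McDiarmid to $\hat{k}-E_g[\hat{k}]$, whereas the paper centers on the off-diagonal U-statistic $\tilde{k}_{q,G}$ (whose expectation is exactly $k_{q,G}$) and bounds the diagonal correction $|\hat{k}_{q,G}-\tilde{k}_{q,G}|\le 1/r$ deterministically.
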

\begin{proof}
See Appendix.
\end{proof}

\subsection{Generalization Bounds}
Given a labeled training set $S \ = \big \{  (x_i,y_i)\, |\, x_i \in X, y_i \in Y = \{+1,-1\}\big \}$, our goal is to learn  a decision function $f : \ X\rightarrow Y$  via  empirical risk  minimization (ERM)  $$\min_{f \in \mathcal{H}_{\mathcal{K}}} \hat{\mathcal{E}}_V(f) = \frac{1}{N}\sum_{i = 1}^N V(y_i f(x_i))$$ where $V$ is convex and $L$-Lipschitz  loss function. Let $\mathcal{E}_V(f) = \mbb{E}_{x,y\sim P} V(y f(x))$ be the expected risk for $f\in \mcal{H_K}$. According to the representer theorem, the solution of ERM  is given  by $ f^\star(\cdot) = \sum_{i = 1}^N \alpha_i^\star k_{q,G}(x_i,.)$. 

We consider linear hyperkernel $h$  in Eq. \eqref{eq:localhaar}  and consider  $\mathcal{H}_{\mathcal{K}}$, the  
RKHS induced by the kernel  
$k_{q,G}(x,y) = \int_G \int_G k(gx,g'x')\, q(g) q(g')\, d \nu(g) d \nu(g') $, 
as introduced in Sec. \ref{subsec:orbitembed}.
Similar to \cite{mroueh2015}, for $C>0$, we define $\mathcal{F}_p$ an infinite dimensional space to approximate $\mathcal{H}_\mcal{K}$ (see \cite{rahimi2008uniform} for a motivation for this approximation):
\begin{align*}
\mathcal{F}_p \equiv \Big \{ f(x) = \int_\Omega \alpha(\omega) \int_G \phi(gx,\omega) q(g)d\nu(g) d\omega \Big | \\
 |\alpha(\omega) | \leq C p(\omega)\Big \}, 
\end{align*}
where $\phi(gx,\omega)=e^{-i\la gx,\omega\ra}$. Similarly define the linear space in the span of $\psi_{RF}(\cdot)$, 
$\mathcal{\hat{F}}_p \equiv \Big \{ \hat{f}(x) = \langle \alpha, \psi_{RF}(x)\rangle = \sum_{k=1}^s \alpha_k \frac{1}{r}\sum_{i=1}^{r} \phi(g_i x,\omega_k)  \Big | |\alpha_k | \leq \frac{C}{s}\Big \}$.
\begin{theorem}
Let $\delta >0$.Consider the training set  $S \ = \ \big \{  (x_i,y_i) \ | \ x_i \ \in X, y_i \ \in Y, i = 1 \ldots N  \big \}$ sampled from the input space and let $f_N^\star$ is the empirical risk minimizer such that $f_N^\star \ = \argmin_{f \in \hat{\mathcal{F}}} \hat{\mathcal{E}}_V(f) = \frac{1}{N} \sum_{i= 1}^N V (y_i f(x_i))$, then we have with probability $1-3\delta$ (over the training set, random templates and group elements)
\begin{align*}
&{\mathcal{E}_V}(f^\star_N ) -  \min_{f \in \mathcal{F}_P} {\mathcal{E}_V}(f) \leq \\ & \mcal{O} \left(\left(\frac{1}{\sqrt{N}} + \frac{1}{\sqrt{s}} + \frac{1}{\sqrt{r}} \right)LC \sqrt{\log \frac{1}{\delta}}\right).	
\end{align*}
\label{theo:GBounds}
\end{theorem}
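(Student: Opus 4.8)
The plan is to follow the two-stage random-features argument of Rahimi--Recht and Mroueh et al., augmented with a third Monte Carlo stage accounting for the group averaging. Writing $f^\star_P = \argmin_{f\in\mathcal{F}_P}\mathcal{E}_V(f)$ and choosing a surrogate $\tilde f \in \hat{\mathcal{F}}$ (constructed below), I would decompose the excess risk as
\begin{align*}
\mathcal{E}_V(f^\star_N) - \mathcal{E}_V(f^\star_P) &= \big[\mathcal{E}_V(f^\star_N) - \hat{\mathcal{E}}_V(f^\star_N)\big] + \big[\hat{\mathcal{E}}_V(f^\star_N) - \hat{\mathcal{E}}_V(\tilde f)\big] \\
&\quad + \big[\hat{\mathcal{E}}_V(\tilde f) - \mathcal{E}_V(\tilde f)\big] + \big[\mathcal{E}_V(\tilde f) - \mathcal{E}_V(f^\star_P)\big].
\end{align*}
The second bracket is nonpositive because $f^\star_N$ minimizes the empirical risk over $\hat{\mathcal{F}}$ while $\tilde f\in\hat{\mathcal{F}}$; the first and third are controlled by uniform convergence of empirical to population risk over $\hat{\mathcal{F}}$ (yielding the $1/\sqrt{N}$ term); and the fourth is the function-approximation error of replacing $f^\star_P$ by a finite random-feature function (yielding the $1/\sqrt{s}$ and $1/\sqrt{r}$ terms).

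For the generalization piece I would bound the empirical Rademacher complexity of $\hat{\mathcal{F}}$. Since each coordinate of $\psi_{RF}$ is an $r$-fold average of unit-modulus terms $\phi(g_i x,\omega_k)=e^{-i\langle g_i x,\omega_k\rangle}$ and the coefficients obey $|\alpha_k|\le C/s$, every $\hat f\in\hat{\mathcal{F}}$ satisfies $|\hat f(x)|\le C$ uniformly, so the class is an $\ell_1$-constrained linear family over bounded features whose Rademacher complexity scales as $\mathcal{O}(C/\sqrt{N})$. Applying the Ledoux--Talagrand contraction inequality for the $L$-Lipschitz loss $V$, then symmetrization and a bounded-differences (McDiarmid) tail bound, gives $\sup_{\hat f\in\hat{\mathcal{F}}}|\mathcal{E}_V(\hat f)-\hat{\mathcal{E}}_V(\hat f)| = \mathcal{O}\big(LC\sqrt{\log(1/\delta)/N}\big)$ with probability $1-\delta$ over the training sample, dispatching the first and third brackets.

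For the approximation piece I would write $f^\star_P(x)=\int_\Omega \alpha^\star(\omega)\int_G \phi(gx,\omega)q(g)\,d\nu(g)\,d\omega$ with $|\alpha^\star(\omega)|\le C p(\omega)$ and set $\tilde f(x)=\sum_{k=1}^s \frac{\alpha^\star(\omega_k)}{s\,p(\omega_k)}\frac1r\sum_{i=1}^r\phi(g_i x,\omega_k)$, so that $|\alpha_k|\le C/s$ and $\tilde f\in\hat{\mathcal{F}}$. Introducing the intermediate function $f_1(x)=\int_\Omega \alpha^\star(\omega)\frac1r\sum_{i=1}^r\phi(g_i x,\omega)\,d\omega$, which shares the sampled group elements, I split $\|\tilde f-f^\star_P\|_\infty \le \|\tilde f-f_1\|_\infty + \|f_1-f^\star_P\|_\infty$. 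The first difference is a Monte Carlo average over $s$ i.i.d.\ templates of a bounded integrand, giving $\mathcal{O}(C/\sqrt{s})$; the second is, pointwise, bounded by $C\,\mathbb{E}_{\omega\sim p}\big|\int_G\phi(gx,\omega)q(g)\,d\nu(g)-\frac1r\sum_i\phi(g_i x,\omega)\big|$, and since $|\phi|\equiv 1$ the inner empirical average has variance $\mathcal{O}(1/r)$ uniformly in $x$, giving $\mathcal{O}(C/\sqrt{r})$. High-probability versions follow from McDiarmid applied to $\sup_x$, as replacing one template or one group element perturbs the supremum by $\mathcal{O}(C/s)$ or $\mathcal{O}(C/r)$ respectively. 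Because $V$ is $L$-Lipschitz, $\mathcal{E}_V(\tilde f)-\mathcal{E}_V(f^\star_P)\le L\|\tilde f-f^\star_P\|_\infty = \mathcal{O}\big(LC(1/\sqrt{s}+1/\sqrt{r})\big)$.

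Collecting the three contributions and taking a union bound over the independent randomness in the training set, the templates $\{\omega_k\}$, and the group samples $\{g_i\}$ yields the stated bound with probability $1-3\delta$. The main obstacle is the $1/\sqrt{r}$ term, the genuinely new ingredient beyond Rahimi--Recht: one must verify that threading the group Monte Carlo average through the $\omega$-integral costs only $\mathcal{O}(C/\sqrt{r})$ \emph{uniformly} over $x$, which hinges on the unit modulus of $\phi$ making the per-$\omega$ variance independent of $x$; carefully tracking the constants $C$ and $L$ across the nested averages is the other place where care is required.
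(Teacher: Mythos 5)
Your overall architecture matches the paper's: the same risk decomposition (estimation error handled by Rademacher complexity plus contraction for the $L$-Lipschitz loss, giving the $1/\sqrt{N}$ term; approximation error handled by a two-stage Monte Carlo argument over templates and group elements, giving the $1/\sqrt{s}+1/\sqrt{r}$ terms), the same surrogate $\tilde f(x)=\sum_{k=1}^s \frac{\alpha^\star(\omega_k)}{s\,p(\omega_k)}\frac1r\sum_{i=1}^r\phi(g_i x,\omega_k)$, and the same union bound yielding $1-3\delta$. The only structural difference is cosmetic: you sample the group first in your intermediate function $f_1$, while the paper samples the templates first (its intermediate keeps the exact group integral); both orderings are valid since the two sources of randomness are independent.

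However, there is a genuine gap in how you control the approximation error. You work in the sup norm and claim that $\|\tilde f - f_1\|_\infty = \mathcal{O}(C/\sqrt{s})$ and $\|f_1 - f^\star_P\|_\infty = \mathcal{O}(C/\sqrt{r})$ follow from pointwise variance bounds plus ``McDiarmid applied to $\sup_x$.'' This does not work as stated: McDiarmid only controls the deviation of $\sup_x|f_1-f^\star_P|$ around its \emph{expectation} $\mathbb{E}_{g_1,\dots,g_r}\sup_x|f_1-f^\star_P|$, and your pointwise variance argument bounds $\sup_x \mathbb{E}|f_1-f^\star_P|$, not $\mathbb{E}\sup_x|f_1-f^\star_P|$. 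Bounding the expected supremum over the continuum $X$ requires an additional empirical-process argument (an $\varepsilon$-net over the compact set $X$ plus Lipschitz continuity, as in the proof of Theorem 3.1), which would also inject dimension-dependent logarithmic factors into $s$ and $r$. The paper avoids this entirely by measuring the approximation error in $\mathcal{L}_2(X,\mu)$ rather than $\mathcal{L}_\infty$: it invokes Lemma 4 of Rahimi--Recht (concentration of averages of i.i.d.\ vectors in a ball of radius $M=C$ in a Hilbert space) to bound both $\|f^\star-\tilde f\|_{\mathcal{L}_2(X,\mu)}$ and $\|\tilde f-\hat f\|_{\mathcal{L}_2(X,\mu)}$ directly, with no covering argument and no dimension dependence. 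The $\mathcal{L}_2(\mu)$ bound is also all that the risk comparison needs, since by Lipschitzness of $V$ and Jensen's inequality $\mathcal{E}_V(\hat f)-\mathcal{E}_V(f)\le L\,\mathbb{E}|\hat f-f|\le L\|\hat f-f\|_{\mathcal{L}_2(X,\mu)}$ when $\mu$ is the data marginal. So your proof is repairable, but the repair is exactly the paper's move: drop the sup norm, and run the two Monte Carlo stages as Hilbert-space-valued concentration in $\mathcal{L}_2(X,\mu)$.
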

\begin{proof}
See Appendix.
\end{proof}

\section{Empirical Observations} \label{sec:emp_eval}
We evaluate the proposed method (referred as \proposed~ here) on three real datasets. We use Gaussian kernel as the base kernel in all our experiments. 
For methods that produce random (unsupervised) features, 
which include the proposed approach as well as regular random Fourier (abbrv. as RF) \cite{randomfourier07} 
and Nystrom \cite{nystrom2001} method, we report performance with: (i) linear decision boundary on these features 
(linear SVM or linear regularized least squares (RLS)),
and (ii) nonlinear decision boundary which is realized by having a Gaussian kernel on top of the features and approximating it through 
random Fourier features \cite{randomfourier07}, followed by a linear SVM or RLS. The later can also be viewed as
using a nonlinear hyperkernel over RKHS embeddings of orbit distributions (also see Remark (3) at the end of Sec.~\ref{sec:formulation}). 
Parameters for all the methods are selected using grid search on a hold-out validation set unless otherwise stated. 

\begin{table}[t]
\centering
\begin{tabular}{l c c}
\toprule
 & \textbf{RMSE}& \textbf{RMSE w/} \\
\textbf{Method } & & {\bf 2nd layer RF}  \\
\midrule
Original (RF) & 14.01 & 13.78 \\
Original (Nys) & 13.97 & 13.81  \\
Original (GP) & 13.48 & \scriptsize N/A \\
Sort-Coulomb (RF) & 12.89 & 12.49  \\
Sort-Coulomb (Nys) & 12.83 & 12.51  \\
Sort-Coulomb (GP)\cite{montavon2012learning} & 12.59 & \scriptsize N/A \\
Rand-Coulomb\cite{montavon2012learning} & 11.40 & \scriptsize N/A   \\
GICDF \cite{mroueh2015} & 12.25 & \scriptsize N/A \\
\proposed (RF) & \bf 10.82 & \bf 10.05 \\
\proposed (Nys) & 10.87 & 10.45 \\
\bottomrule
\end{tabular}
\caption{RMSE on Quantum Machine data}
\label{tab:quant1}
\end{table}

\subsection{Quantum Machine dataset}  
This data consists of $7165$ Coulomb matrices of size $23\times 23$ (each matrix corresponding to a molecule)
and their associated atomization energies in kcal/mol. It is a small subset of a large dataset collected by 
Blum and Reymond (2009) \cite{blum2009970}, and was recently used by Montavon et al. (2012) \cite{montavon2012learning} for evaluation.
The goal is to predict atomization energies of molecules which is modeled as a regression task.

The atomization energy is known to be invariant to permutations of rows/columns of the Coulomb matrix
which motivates the use of representations invariant to the permutation group.
We follow the experimental methodology of \cite{montavon2012learning} and report mean cross-validation
accuracy on the five folds provided in the dataset. An inner cross-validation is used for tuning the parameters
for each fold as in \cite{montavon2012learning}. 
We compare the performance of our method with several baselines in Table \ref{tab:quant1}: 
(i) \emph{Original (GP/RF/Nys):} Gaussian Process regression on original Coulomb matrices and its approximation via random Fourier (RF) \cite{randomfourier07} and Nystrom features \cite{nystrom2001},
(ii) \emph{Sort-Coulomb (GP/RF/Nys):} GP regression on sorted Coulomb matrices (sorted according to row norms) \cite{montavon2012learning} and its approximation,
(iii) \emph{Rand-Coulomb:} permutation invariant kernel proposed in \cite{montavon2012learning}, and
(iv) \emph{GICDF:} Group invariant CDF (histogram) based features proposed in \cite{mroueh2015}.
The results for \emph{Sort-Coulomb (GP)} and \emph{Rand-Coulomb} are taken directly from \cite{montavon2012learning}. 
For all RF and Nystr\"{o}m based features we use $10k$ random templates ($\omega$).
For GICDF and our method, we sample $70$ random permutations ($r=70$ in Eq.~\ref{eq:rf_feats1}) using the same scheme as in \cite{montavon2012learning}.
The proposed \proposed~ outperforms all these directly competing methods including \emph{Rand-Coulomb} and \emph{GICDF}. 
Neural network based features used in \cite{montavon2012learning} can also be used within our framework
but we stick to raw Coulomb matrices for simplicity sake.


\begin{figure}[h] \label{fig:fig2}
\centering
\includegraphics[width=8.5cm]{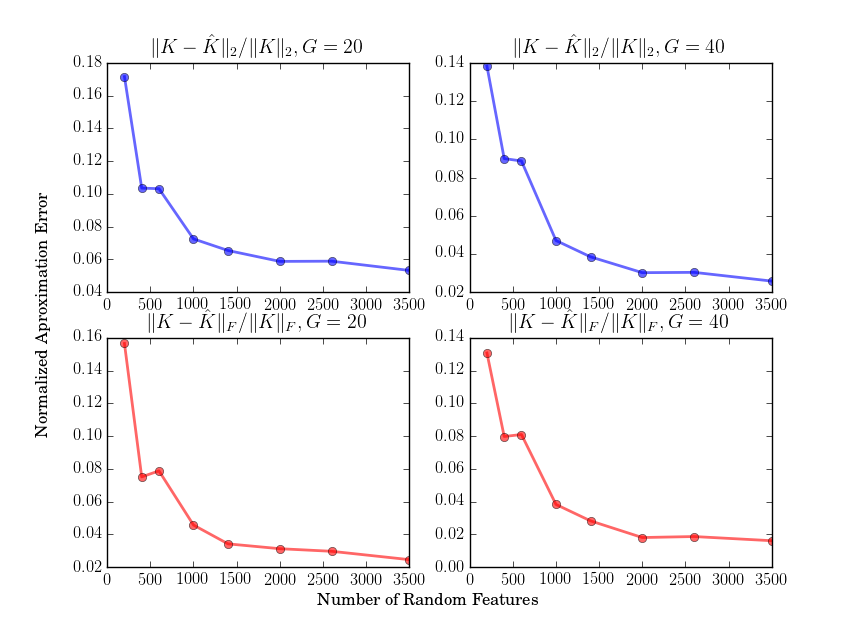}
\vspace{-30pt}
\caption{Kernel approximation error (normalized) in spectral and Frobenius norms vs number of random features, for $20$ (left) and $40$ (right) group transformations}
\end{figure}

{\bf Kernel Approximation.} 
We also report empirical results on approximation error for kernel matrix (in terms of spectral norm and Frobenius norm)
in Fig.~\ref{fig:fig2}. The plots show the approximation error for different number of group actions as the number of random
Fourier features are increased. The kernel used is the Gaussian kernel. 
The true kernel has been computed using $70$ group elements randomly sampled from the permutation group.
The normalized error for all the cases goes down with the number of random Fourier features which is in line with our theoretical convergence results.

\begin{table}[t]
\label{tab:mnist}
\centering
\begin{tabular}{l c c}
\toprule
 & \textbf{Accuracy}& \textbf{Accuracy w/} \\
\textbf{Method } & & {\bf 2nd layer RF}  \\
\midrule
Original (RF) & 87.75 & 88.01 \\
Original (Nys) & 88.93 & 88.98 \\
Original (RBF) & 90 & \scriptsize N/A \\
TI-RBM \cite{sohnlee12} & 95.8 & \scriptsize N/A \\
RC-RBM \cite{schmidt12} & 97.02 & \scriptsize N/A \\
GICDF \cite{mroueh2015} & 93.81 & \scriptsize N/A \\
Z2-CNN \cite{groupequiconvnets} & 94.97 & \scriptsize N/A \\
P4-CNN \cite{groupequiconvnets} & \bf 97.72 & \scriptsize N/A \\
\proposed (RF) & \bf 96.83 &  97.18 \\
\proposed (Nys) & 96.81 & \bf 97.21 \\
\bottomrule
\end{tabular}
\caption{Rotated MNIST results}
\end{table}

\subsection{Rotated MNIST} 
Rotated MNIST dataset \cite{larochelle2007empirical} consists of total $62k$ images of digits ($12k$ for training
and $50k$ for test), obtained by
rotating original MNIST images by an angle sampled uniformly between $0$ and $2\pi$. 
We compare the proposed method with several other approaches in Table \ref{tab:mnist}.
We use von-misses distribution ($p(\theta)=\exp(-\kappa \cos (\theta))$ with $\kappa=0.2$, selected using cross-validation) to 
sample $r=100$ rotations. 
We use $s=7k$ random templates for both RF and Nystrom approximations, and use $17k$ random templates 
for layer-2 RF approximation. 
The results for the cited methods in Table \ref{tab:mnist} are directly taken from the respective papers,
except for GICDF \cite{mroueh2015} which we implemented ourselves. 
The proposed \proposed~ outperforms most of the competing methods including
deep architectures like rotation-invariant convolutional RBM (RC-RBM) \cite{schmidt12},
transformation invariant RBM (TI-RBM) \cite{sohnlee12}, and regular deep CNN (Z2-CNN) \cite{groupequiconvnets}.
Our method also performs close to the recently proposed group-equivariant CNN (P4-CNN) \cite{groupequiconvnets}.

\vspace{-2mm}
\subsection{CIFAR-10} 
\vspace{-2mm}
The CIFAR-10 dataset consists of $60k$ RGB images ($50k/10k$ for train/test) of size $32 \times 32$, divided into 10 classes. 
We consider a sub-group of the affine group $Aff(2)$ consisting of rotations, translations and isotropic scaling. 
Instead of operating with a distribution (e.g. Gaussian) over this subgroup, 
we use three individual distributions to have better control over the three variations: 
a log-normal distribution over the scaling group ($\mu=0,\sigma=0.3$), 
a Gaussian distribution over the translation group ($\mu=0,\sigma=0.3$),
and a von-misses distribution over the rotation group ($\kappa=9$).
We observe that working with wider distributions over these groups actually hurts the performance,
highlighting the importance of locality for CIFAR-10. 
We use the normalized pixel intensities as our input features and use the group action defined in Lemma \ref{lem:unitary} to keep it unitary. 
We use $s=10k$ random templates and $r=50$ group transforms for the first layer RF features (Eq.~\ref{eq:rf_feats1}), and 
use $30k$ random templates for second layer RF features. 
The proposed \proposed~outperforms vanilla RF features as shown in Table \ref{tab:cifar}. 
Nystr\"{o}m based features gave similar results as random Fourier features in our early 
explorations. 
We were not able to scale GICDF~\cite{mroueh2015} to a suitable number of random templates
due to memory issues (for every random template, GICDF generates $n$ features (number of bins, set to $25$ following \cite{mroueh2015})
blowing up the overall feature dimension to $n\times 10k$).
Note that the performance of \proposed~on this data is still significantly worse than 
deep CNNs \cite{groupequiconvnets} since \proposed~treats the image as a vector ignoring
the spatial neighborhood structure taken into account by CNNs 
through translation invariance over small image patches. Incorporating orbit statistics of image patches
in our framework is left for future work.

\begin{table} \label{tab:cifar}
\centering
\begin{tabular}{c c c  c  }
\toprule
\multicolumn{2}{c}{~~~Original (RF) \cite{randomfourier07}~~~}  & \multicolumn{2}{c}{~~~\proposed~~~} \\
\textbf{1-layer} & \textbf{2-layer} & \textbf{1-layer} & \textbf{2-layer} \\
\midrule
 61.02 & 62.79 & \textbf{64.19} & \textbf{67.32}  \\
\bottomrule
\end{tabular}
\caption{CIFAR-10 results}
\end{table}

\section{Related Work}
\textbf{Invariant Kernel Methods.}  \cite{Anselmi_IAI}  introduced Tomographic Probabilistic Representations (TPR) that embed orbits to probability distributions. Unlike TPR, our representation maps  orbits or local portions of the orbit via kernel mean embedding to an RKHS and allows to define similarity between orbits in this space. Indeed our representation is infinite dimensional and is related to Haar Invariant Kernel \cite{haarintkernels}. As discussed earlier it can be approximated via random features or Nystr\"{o}m sampling. Other approaches for building invariant kernels were defined in \cite{WChapelle2007} that focuses on dilation invariances.  A kernel view of histogram of gradients was introduced in \cite{KernelDesc}, where finite dimensional features were defined through kernel PCA. Kernel convolutional networks introduced in  \cite{Mairal1},\cite{Mairal2}, considers the composition of  multilayer kernels, where local image patches are represented as points in a reproducing kernel. However they do not consider general group invariances. The work of \cite{dai2016learning} considers the general problem of learning from conditional distributions. When applied to invariant learning, their optimization approach needs to sample a group transformed example in every SGD iteration whereas our approach allows working with group actions on the random templates. \\
 \textbf{Invariance in Neural Networks.} Inducing invariances in neural networks has attracted many recent research streams. It is now well established that convolutional neural networks (CNN) \cite{Lecun98gradient-basedlearning} ensure translation invariance.  \cite{Galaxy} showed that mapping orbits of rotated and flipped images through a shared fully connected network builds some invariance in the network. Scattering networks \cite{Bruna}  have built in invariances for the roto-translation group.   \cite{DSN} generalizes CNN to general group transformations. \cite{dieleman-cyclic-2016} exploits cyclic symmetry to have invariant prediction in the network.  More recently, \cite{groupequiconvnets} designs a convolutional neural network that is equivariant to group transforms by introducing convolution over the group.

\section{Concluding Remarks}
\vspace{-2mm}
The proposed approach can be suitable for large-scale problems, benefiting from the recent advances in scalability of randomized kernel methods~\cite{lu2014scale,dai2014scalable,avron2015high}. 
As a future direction, we would like to extend our framework to operate at the level of image patches, enabling us to capture local spatial
structure.
Further, the proposed approach requires computation of all $r$ group transformations for all the sampled random templates. Reducing the required number of group transformations is an important direction for future work. 
Our work also assumes that the appropriate group actions are given. Extension to the case when the group transformations are learned from the data (e.g., using local tangent space \cite{rifai2011manifold}) is also an important direction for future work.

{ {\bf Acknowledgments:} We thank Dmitry Malioutov for several insightful discussions. This work was done while Anant Raj was a summer intern at IBM Research. }
\newpage
\clearpage
\sloppy
\bibliography{ml}

\begin{thebibliography}{10}

\bibitem{poggio2014inv}
Fabio Anselmi, Joel~Z Leibo, Lorenzo Rosasco, Jim Mutch, Andrea Tacchetti, and
  Tomaso Poggio.
\newblock Unsupervised learning of invariant representations in hierarchical
  architectures.
\newblock {\em arXiv preprint arXiv:1311.4158}, 2014.

\bibitem{Anselmi_IAI}
Fabio Anselmi, Lorenzo Rosasco, and Tomaso Poggio.
\newblock On invariance and selectivity in representation learning.
\newblock {\em Information and Inference}, 2016.

\bibitem{avron2015high}
Haim Avron and Vikas Sindhwani.
\newblock High-performance kernel machines with implicit distributed
  optimization and randomization.
\newblock {\em Technometrics}, 2015.

\bibitem{blum2009970}
Lorenz~C Blum and Jean-Louis Reymond.
\newblock 970 million druglike small molecules for virtual screening in the
  chemical universe database gdb-13.
\newblock {\em Journal of the American Chemical Society}, 131(25), 2009.

\bibitem{KernelDesc}
Liefeng Bo, Xiaofeng Ren, and Dieter Fox.
\newblock Kernel descriptors for visual recognition.
\newblock In {\em NIPS}, 2010.

\bibitem{bochner33}
S.~Bochner.
\newblock Monotone funktionen, stieltjes integrale und harmonische analyse.
\newblock {\em Math. Ann.}, 108, 1933.

\bibitem{bojarski2016structured}
Mariusz Bojarski, Anna Choromanska, Krzysztof Choromanski, Francois Fagan,
  Cedric Gouy-Pailler, Anne Morvan, Nouri Sakr, Tamas Sarlos, and Jamal Atif.
\newblock Structured adaptive and random spinners for fast machine learning
  computations.
\newblock {\em arXiv preprint arXiv:1610.06209}, 2016.

\bibitem{Bruna}
Joan Bruna and Stephane Mallat.
\newblock Invariant scattering convolution networks.
\newblock {\em IEEE Trans. Pattern Anal. Mach. Intell.}, 2013.

\bibitem{choromanski2016recycling}
Krzysztof Choromanski and Vikas Sindhwani.
\newblock Recycling randomness with structure for sublinear time kernel
  expansions.
\newblock {\em arXiv preprint arXiv:1605.09049}, 2016.

\bibitem{christmann2010universal}
Andreas Christmann and Ingo Steinwart.
\newblock Universal kernels on non-standard input spaces.
\newblock In {\em Advances in neural information processing systems}, pages
  406--414, 2010.

\bibitem{groupequiconvnets}
Taco Cohen and Max Welling.
\newblock Group equivariant convolutional networks.
\newblock In {\em Proceedings of the 33rd International Conference on Machine
  Learning}, 2016.

\bibitem{cuckersmale02}
Felipe Cucker and Steve Smale.
\newblock On the mathematical foundations of learning.
\newblock {\em Bulletin of the American Mathematical Society}, 39:1--49, 2002.

\bibitem{dai2016learning}
Bo~Dai, Niao He, Yunpeng Pan, Byron Boots, and Le~Song.
\newblock Learning from conditional distributions via dual kernel embeddings.
\newblock {\em arXiv preprint arXiv:1607.04579}, 2016.

\bibitem{dai2014scalable}
Bo~Dai, Bo~Xie, Niao He, Yingyu Liang, Anant Raj, Maria-Florina~F Balcan, and
  Le~Song.
\newblock Scalable kernel methods via doubly stochastic gradients.
\newblock In {\em Advances in Neural Information Processing Systems}, pages
  3041--3049, 2014.

\bibitem{dieleman-cyclic-2016}
Sander Dieleman, Jeffrey De~Fauw, and Koray Kavukcuoglu.
\newblock Exploiting cyclic symmetry in convolutional neural networks.
\newblock {\em ICML}, 2016.

\bibitem{Galaxy}
Sander Dieleman, Kyle Willett, and Joni Dambre.
\newblock Rotation-invariant convolutional neural networks for galaxy
  morphology prediction.
\newblock {\em Monthly Notices Of The Royal Astronomical Society}, 2015.

\bibitem{nystrom2005}
Petros Drineas and Michael~W Mahoney.
\newblock On the nystr{\"o}m method for approximating a gram matrix for
  improved kernel-based learning.
\newblock {\em Journal of Machine Learning Research}, 6(Dec):2153--2175, 2005.

\bibitem{fukumizu2007kernel}
Kenji Fukumizu, Arthur Gretton, Xiaohai Sun, and Bernhard Sch{\"o}lkopf.
\newblock Kernel measures of conditional dependence.
\newblock In {\em NIPS}, volume~20, pages 489--496, 2007.

\bibitem{DSN}
Robert Gens and Pedro~M Domingos.
\newblock Deep symmetry networks.
\newblock In {\em NIPS}, 2014.

\bibitem{gittens2013nystrom}
Alex Gittens and Michael~W Mahoney.
\newblock Revisiting the nystrom method for improved large-scale machine
  learning.
\newblock In {\em Proceedings of the 30th International Conference on Machine
  learning}, volume~28, 2013.

\bibitem{haarintkernels}
Bernard Haasdonk, A~Vossen, and Hans Burkhardt.
\newblock Invariance in kernel methods by haar-integration kernels.
\newblock In {\em Scandinavian Conference on Image Analysis}, pages 841--851.
  Springer, 2005.

\bibitem{hoeffding1963probability}
Wassily Hoeffding.
\newblock Probability inequalities for sums of bounded random variables.
\newblock {\em Journal of the American statistical association},
  58(301):13--30, 1963.

\bibitem{nystrom2012sampling}
Sanjiv Kumar, Mehryar Mohri, and Ameet Talwalkar.
\newblock Sampling methods for the nystr{\"o}m method.
\newblock {\em Journal of Machine Learning Research}, 13, 2012.

\bibitem{larochelle2007empirical}
Hugo Larochelle, Dumitru Erhan, Aaron Courville, James Bergstra, and Yoshua
  Bengio.
\newblock An empirical evaluation of deep architectures on problems with many
  factors of variation.
\newblock In {\em Proceedings of the 24th international conference on Machine
  learning}, 2007.

\bibitem{le2013fastfood}
Quoc Le, Tam{\'a}s Sarl{\'o}s, and Alex Smola.
\newblock Fastfood-approximating kernel expansions in loglinear time.
\newblock In {\em Proceedings of the 30th International Conference on Machine
  learning}, 2013.

\bibitem{Lecun98gradient-basedlearning}
Yann Lecun, L�on Bottou, Yoshua Bengio, and Patrick Haffner.
\newblock Gradient-based learning applied to document recognition.
\newblock In {\em Proceedings of the IEEE}, pages 2278--2324, 1998.

\bibitem{lu2014scale}
Zhiyun Lu, Avner May, Kuan Liu, Alireza~Bagheri Garakani, Dong Guo,
  Aur{\'e}lien Bellet, Linxi Fan, Michael Collins, Brian Kingsbury, Michael
  Picheny, et~al.
\newblock How to scale up kernel methods to be as good as deep neural nets.
\newblock {\em arXiv preprint arXiv:1411.4000}, 2014.

\bibitem{Mairal2}
Julien Mairal.
\newblock End-to-end kernel learning with supervised convolutional kernel
  networks.
\newblock In {\em NIPS}, 2016.

\bibitem{Mairal1}
Julien Mairal, Piotr Koniusz, Zaid Harchaoui, and Cordelia Schmid.
\newblock Convolutional kernel networks.
\newblock In {\em NIPS}, 2014.

\bibitem{montavon2012learning}
Gr{\'e}goire Montavon, Katja Hansen, Siamac Fazli, Matthias Rupp, Franziska
  Biegler, Andreas Ziehe, Alexandre Tkatchenko, Anatole~V Lilienfeld, and
  Klaus-Robert M{\"u}ller.
\newblock Learning invariant representations of molecules for atomization
  energy prediction.
\newblock In {\em Advances in Neural Information Processing Systems}, pages
  440--448, 2012.

\bibitem{mroueh2015}
Youssef Mroueh, Stephen Voinea, and Tomaso~A Poggio.
\newblock Learning with group invariant features: A kernel perspective.
\newblock In {\em Advances in Neural Information Processing Systems}, pages
  1558--1566, 2015.

\bibitem{muandet2012learning}
Krikamol Muandet, Kenji Fukumizu, Francesco Dinuzzo, and Bernhard
  Sch{\"o}lkopf.
\newblock Learning from distributions via support measure machines.
\newblock In {\em Advances in neural information processing systems}, pages
  10--18, 2012.

\bibitem{kme2016survey}
Krikamol Muandet, Kenji Fukumizu, Bharath Sriperumbudur, and Bernhard
  Sch{\"o}lkopf.
\newblock Kernel mean embedding of distributions: A review and beyonds.
\newblock {\em arXiv preprint arXiv:1605.09522}, 2016.

\bibitem{randomfourier07}
Ali Rahimi and Benjamin Recht.
\newblock Random features for large-scale kernel machines.
\newblock In {\em Advances in neural information processing systems}, pages
  1177--1184, 2007.

\bibitem{rahimi2008uniform}
Ali Rahimi and Benjamin Recht.
\newblock Uniform approximation of functions with random bases.
\newblock In {\em Communication, Control, and Computing, 2008 46th Annual
  Allerton Conference on}. IEEE, 2008.

\bibitem{rahimi2009weighted}
Ali Rahimi and Benjamin Recht.
\newblock Weighted sums of random kitchen sinks: Replacing minimization with
  randomization in learning.
\newblock In {\em Advances in neural information processing systems}, pages
  1313--1320, 2009.

\bibitem{rifai2011manifold}
Salah Rifai, Yann Dauphin, Pascal Vincent, Yoshua Bengio, and Xavier Muller.
\newblock The manifold tangent classifier.
\newblock In {\em NIPS}, volume 271, page 523, 2011.

\bibitem{introgrouptheory}
Joseph Rotman.
\newblock {\em An introduction to the theory of groups}, volume 148.
\newblock Springer Science \& Business Media, 2012.

\bibitem{rudin2011fourier}
Walter Rudin.
\newblock {\em Fourier analysis on groups}.
\newblock John Wiley \& Sons, 2011.

\bibitem{schmidt12}
Uwe Schmidt and Stefan Roth.
\newblock Learning rotation-aware features: From invariant priors to
  equivariant descriptors.
\newblock In {\em Computer Vision and Pattern Recognition (CVPR), 2012 IEEE
  Conference on}, 2012.

\bibitem{learningkernelsbook}
Bernhard Scholkopf and Alexander~J Smola.
\newblock {\em Learning with kernels: support vector machines, regularization,
  optimization, and beyond}.
\newblock MIT press, 2001.

\bibitem{kme2007smola}
Alex Smola, Arthur Gretton, Le~Song, and Bernhard Sch{\"o}lkopf.
\newblock A hilbert space embedding for distributions.
\newblock In {\em International Conference on Algorithmic Learning Theory},
  pages 13--31. Springer, 2007.

\bibitem{sohnlee12}
Kihyuk Sohn and Honglak Lee.
\newblock Learning invariant representations with local transformations.
\newblock In {\em Proceedings of the 29th International Conference on Machine
  learning}, 2012.

\bibitem{injective2007bharat}
Bharath~K Sriperumbudur, Arthur Gretton, Kenji Fukumizu, Gert~RG Lanckriet, and
  Bernhard Sch{\"o}lkopf.
\newblock Injective hilbert space embeddings of probability measures.
\newblock In {\em COLT}, volume~21, pages 111--122, 2008.

\bibitem{sriperumbudur2010hilbert}
Bharath~K Sriperumbudur, Arthur Gretton, Kenji Fukumizu, Bernhard
  Sch{\"o}lkopf, and Gert~RG Lanckriet.
\newblock Hilbert space embeddings and metrics on probability measures.
\newblock {\em Journal of Machine Learning Research}, 11(Apr):1517--1561, 2010.

\bibitem{steinwart01}
Ingo Steinwart.
\newblock On the influence of the kernel on the consistency of support vector
  machines.
\newblock {\em Journal of machine learning research}, 2(Nov):67--93, 2001.

\bibitem{WChapelle2007}
C.~Walder and O.~Chapelle.
\newblock Learning with transformation invariant kernels.
\newblock In {\em NIPS}, 2007.

\bibitem{nystrom2001}
Christopher Williams and Matthias Seeger.
\newblock Using the nystr{\"o}m method to speed up kernel machines.
\newblock In {\em Proceedings of the 14th annual conference on neural
  information processing systems}, pages 682--688, 2001.

\bibitem{quasimc2014}
Jiyan Yang, Vikas Sindhwani, Haim Avron, and Michael~W. Mahoney.
\newblock Quasi-monte carlo feature maps for shift-invariant kernels.
\newblock In {\em Proceedings of the 31st International Conference on Machine
  learning}, 2014.

\end{thebibliography}
\bibliographystyle{plain}

\appendix

\setcounter{section}{1}

\onecolumn
\begin{center}
{\centering \LARGE Appendix }
\vspace{1cm}
\end{center}
\begin{proof}[\bf Proof of Lemma 2.1]
Since unitary transformations preserve dot-products, i.e., $\la T(x), T(y)\ra = \la x,y\ra$, we need to show
that a group element acting on the image $I:\reals^2\mapsto\reals$ as $T_g [I(x)] = \lvert J_g\rvert^{-1/2} I(T_g^{-1} (x)),\,\forall\, x$ is a unitary
transformation. 

Let $J_g$ be the Jacobian of the transformation $T_g$, with determinant $\lvert J_g\rvert$. We have
\begin{align*}
\lVert I(T_g^{-1} (\cdot))\rVert^2 &= \int  I^2 (T_g^{-1} (x)) dx \\
&= \int I^2(z)\, \lvert J_g\rvert\, dz, \quad \text{substituting } z = T_g^{-1}(x) \Rightarrow dx = \lvert J_g\rvert\, dz \\
&= \lvert J_g\rvert\, \lVert I(\cdot)\rVert^2
\end{align*}
Hence the transformation given as $T_g [I(\cdot)] = \lvert J_g\rvert^{-1/2} I(T_g^{-1} (\cdot))$ is unitary 
and thus $\la T_g(I), T_g(I') \ra = \la I, I'\ra$ for two images $I$ and $I'$. 
\end{proof}
\begin{proof}[\bf Proof of Theorem 3.1]  

We first define the notion of \emph{U-statistics} \cite{hoeffding1963probability}. \\

\textbf{U-statistics -} Let $g : \mathbb{R}^2 \rightarrow \mathbb{R}$ be a symmetric function of its arguments. Given an i.i.d. sequence $X_1, X_2 \cdots X_k$ of $k (\geq 2)$ random variables, the quantity $U:= \frac{1}{n(n-1)} \sum_{i\neq j, i,j = 1}^n g(X_i, X_j)$ is known as a pairwise \textbf{U-statistics}. If $\theta(P) = \mathbb{E}_{X_1,X_2 \sim P}\, g(X_1, X_2)$ then $U$ is an unbiased estimate of $\theta(P)$. 

Our goal is to bound 
$$\underset{x,y \in X}{\sup}\Big |\Big\langle \psi_{RF}(x),\psi_{RF}(y) \Big \rangle- k_{q,G}(x,y)\Big|$$
\noindent where 
$$ \psi_{RF}(x)=\frac{1}{r} \sum_{i=1}^{r} z(g_i x), x \in X \subset \mathbb{R}^d.$$ 
\noindent We work with $z(\cdot)=\sqrt{2/s}[\cos(\langle \omega_1, \cdot\rangle + b_1),\ldots,  cos(\langle \omega_s, \cdot\rangle + b_s]\in \mathbb{R}^s$ with $b_i \sim \text{Unif}(0,2\pi)$ as in \cite{randomfourier07}.

\noindent Let $\widehat{k}_{q,G}(x,y) :=\frac{1}{r^2} \sum_{i,j=1}^{r^2} k (g_i x, g_j y)$ and $\widetilde{k}_{q,G}(x,y) :=\frac{1}{r(r-1)} \sum_{i \neq j  , i,j = 1}^{r^2} k (g_i x, g_j y)$. \\

\noindent Using the triangle inequality we have
\begin{align*}
& \underset{x,y \in X}{\sup}\Big |\Big\langle \psi_{RF}(x),\psi_{RF}(v) \Big \rangle-  k_{q,G}(x,y)\Big|  \leq \underbrace{\underset{x,y \in X}{\sup}\Big |\Big\langle \psi_{RF}(x),\psi_{RF}(y) \Big \rangle- \widehat{k}_{q,G}(x,y)\Big| }_{A}  \\
&\qquad \qquad \qquad \qquad \qquad \qquad  
+ \underbrace{\underset{x,y \in X}{\sup}\Big | \widetilde{k}_{q,G}(x,y) - k_{q,G}(x,y)\Big|}_{B}   
+ \underbrace{\underset{x,y \in X}{\sup}\Big |\widehat{k}_{q,G}(x,y)- \widetilde{k}_{q,G}(x,y)\Big|}_{C} 
\end{align*}

\noindent \textbf{Bounding A.} 
\begin{align*}
A &:= \sup_{x ,y \in X}\Big |\frac{1}{r^2}\sum_{i,j} \left(\langle z(g_i x),z(g_j y) \rangle- k(g_i x,g_j y)\right)\Big|\\
\end{align*}
Let us define $f_{ij}(x,y) := \langle z(g_i x),z(g_j y) \rangle- k(g_i x,g_j y)$, and $f(x,y) = 1/r^2 \sum_{i,j} f_{ij}(x,y)$. Since each of the $s$ independent random variables in the summand of $1/r^2 \sum_{i,j} \langle z(g_ix), z(g_jy)\rangle = \frac{1}{s} \sum_{k=1}^s \bigg(\frac{1}{r^2}$ $\sum_{i,j} 2 \cos(\langle \omega_k, g_i x\rangle + b_k)$ $ \cos(\langle \omega_k, g_j y\rangle + b_k)  \bigg)$ is bounded by $[-2,2]$, using Hoeffding's inequality for a given pair $x,y$, we have

$$ \text{Pr} [|f(x,y)| \geq \varepsilon/4] \leq 2 \exp (-s\varepsilon^2/128).$$  

To obtain a uniform convergence guarantee over $X$, we follow the arguments in \cite{randomfourier07}, relying on covering the space with an $\varepsilon$-net and Lipschitz continuity of the  function $f(x,y)$. 

Since $X$ is compact, we can find an $\varepsilon$-net that covers $X$ with $N_X =\left(\frac{2\,\rm{diam}(X)}{\eta}\right)^{d}$ balls of radius $\eta$ \cite{cuckersmale02}.
Let $\{c_k\}_{k=1}^{N_X}$ be the centers of these balls, and let $L_f$ denote the Lipschitz constant of $f(\cdot,\cdot)$, i.e., 
$|f(x,y) - f(c_k,c_l)| \leq L_f \lVert \left(\begin{smallmatrix} x \\ y \end{smallmatrix}\right) - \left(\begin{smallmatrix} c_k \\ c_l \end{smallmatrix}\right)\rVert$ for all $x,y,c_k,c_l \in X$.
For any $x,y\in X$, there exists a pair of centers $c_k,c_l$ such that $\lVert \left(\begin{smallmatrix} x \\ y \end{smallmatrix}\right) - \left(\begin{smallmatrix} c_k \\ c_l \end{smallmatrix}\right)\rVert < \sqrt{2} \eta$.
We will have $|f(x,y)| <\varepsilon/2$ for all $x,y$ if (i) $|f(c_k,c_l)| < \frac{\varepsilon}{4},\,\forall c_k, c_l$, and (ii) $L_f < \frac{\varepsilon}{4\sqrt{2}\eta}$.

We immediately get the following by applying union bound for all the center pairs $(c_k,c_l)$ 
\begin{align} \text{Pr} \left[\cup_{k,l} |f(c_k,c_l)| \geq \varepsilon/4\right] \leq  2\, N^2_X \exp (-s\varepsilon^2/128).
\label{eq:unionballs}
\end{align}

We use Markov inequality to bound the Lipschitz constant of $f$. By definition, we have $L_f = \sup_{x,y} \lVert \nabla_{x,y}$ $f(x,y)\rVert = \lVert \nabla_{x,y} f (x^*,y^*)\rVert$, where $\nabla_{x,y} f(x,y) = \left(\begin{smallmatrix} \nabla_{x} f (x,y) \\ \nabla_{y} f (x,y) \end{smallmatrix}\right)$.
We also have $\mbb{E}_{\omega\sim p} \nabla_{x,y} \langle z($ $g_i x), z(g_j y) \rangle = \nabla_{x,y} k(g_i x,g_j y)$. It follows that
\begin{align*}
\mbb{E}_{\omega\sim p} \left\lVert \nabla_{x,y} f(x^*,y^*)\right\rVert^2 & = \mbb{E}_{\omega\sim p} \left\lVert \frac{1}{r^2}\sum_{i,j=1}^r \nabla_{x,y} \langle z(g_i x^*),z(g_j y^*) \rangle\right\rVert^2 - \left\lVert \frac{1}{r^2}\sum_{i,j=1}^r \nabla_{x,y} k(g_i x^*,g_j y^*)\right\rVert^2 \\
& \leq \mbb{E}_{\omega\sim p} \left\lVert \frac{1}{r^2}\sum_{i,j=1}^r \nabla_{x,y} \langle z(g_i x^*),z(g_j y^*) \rangle\right\rVert^2 \\
& \leq \mbb{E}_{\omega\sim p} \left( \frac{1}{r^2}\sum_{i,j=1}^r \lVert\nabla_{x,y} \langle z(g_i x^*),z(g_j y^*) \rangle \rVert \right)^2 \\
& \leq 2\mbb{E}_{\omega\sim p} \sup_{x,y,g_i,g_j}   \lVert\nabla_{x} \langle z(g_i x),z(g_j y) \rangle \rVert^2 \\
& \leq 2\mbb{E}_{\omega\sim p} \sup_{x,g}   \left( \frac{1}{s}\sum_{k=1}^s \lVert \nabla_x T_g(x) \omega_k  \rVert \right)^2 \\
& \leq 2\mbb{E}_{\omega\sim p} \sup_{x,g}   \left( \frac{1}{s}\sum_{k=1}^s \lVert \nabla_x T_g(x)\rVert_2 \lVert\omega_k  \rVert \right)^2 \\
& = 2\mbb{E}_{\omega\sim p} \sup_{x,g}   \lVert \nabla_x T_g(x)\rVert_2^2  \frac{1}{s^2}\sum_{k=1}^s \sum_{l=1}^s \lVert\omega_k  \rVert \lVert\omega_l  \rVert  \\
& = 2 \sup_{x,g}   \lVert \nabla_x T_g(x)\rVert_2^2 \frac{1}{s^2}\sum_{k=1}^s \sum_{l=1}^s \mbb{E}_{\omega\sim p} \lVert\omega_k  \rVert \lVert\omega_l  \rVert  \\
& = 2 \sup_{x,g}   \lVert \nabla_x T_g(x)\rVert_2^2  \frac{1}{s^2}\left(s \mbb{E}_{\omega\sim p} \lVert\omega \rVert^2 + \sum_{k,l=1,k\neq l}^s (\mbb{E}_{\omega\sim p}  \lVert\omega \rVert)^2  \right) \quad (\text{$\omega_k$ i.i.d.})\\
& \leq 2 \sup_{x,g}   \lVert \nabla_x T_g(x)\rVert_2^2  \frac{1}{s^2}\left(s \mbb{E}_{\omega\sim p} \lVert\omega \rVert^2 + \sum_{k,l=1,k\neq l}^s \mbb{E}_{\omega\sim p}  \lVert\omega \rVert^2  \right) \quad (\text{Jensen's inequality})\\
& \leq 2 \sigma_p^2 \sup_{x\in X,g\in G} \lVert \nabla_x T_g(x)\rVert_2^2,
\end{align*}
\noindent where $\sigma_p^2 = \mbb{E}(\omega^\top \omega)$, and $T_g(x) = gx$ denotes the transformation corresponding to the group action. 
If we assume the group action to be linear, i.e., $T_g(x+y) = T_g(x) + T_g(y)$ and $T_g(\alpha x) = \alpha T_g(x)$, which holds for all group transformations considered in this work (e.g., rotation, translation, scaling or general affine transformations on image $x$; permutations of $x$), we can bound $ \lVert \nabla_x T_g(x)\rVert_2$ as 
\begin{align*}
\lVert \nabla_x T_g(x)\rVert_2 &= \sup_{u:\lVert u\rVert=1} \lVert \nabla_x T_g(x) u \rVert \\
&= \sup_{u:\lVert u\rVert=1} \left\lVert \lim_{h\to 0} \frac{T_g(x+hu) - T_g(x)}{h} \right\rVert \quad (\text{directional derivative of vector valued function } T_g(\cdot))\\
& = \sup_{u:\lVert u\rVert=1} \lVert T_g (u) \rVert = 1 \\
& \quad (\text{since } T_g(\cdot) \text{ is either unitary or is converted to unitary by construction (see Lemma 2.1)})
\end{align*}

Using Markov inequality, $\text{Pr} [L_f^2 \geq \varepsilon] \leq \mbb{E}(L_f^2)/\varepsilon$, hence we get 
$$ \text{Pr} \left[ L_f \geq \frac{\varepsilon}{4\sqrt{2}\eta} \right] \leq \frac{64\sigma_p^2 \eta^2}{\varepsilon^2}.$$

Combining Eq.~\eqref{eq:unionballs} with the above result on Lipschitz continuity, we get 
\begin{align}
\text{Pr} \left[\sup_{x,y} |f(x,y)| \leq \varepsilon/2\right] \geq  1 - 2\, N^2_X \exp (-s\varepsilon^2/128) - \frac{64\sigma_p^2 \eta^2}{\varepsilon^2}.
\end{align}



\noindent \textbf{Bounding B.}  \\

\noindent As defined earlier, $\widetilde{k}_{q,G}(x,y):=\frac{1}{r(r-1)} \sum_{i \neq j, i,j=1}^{r}k (g_ix, g_j y)$. From the result of U-statistics literature \cite{hoeffding1963probability}, it is easy to see that $\mathbb{E}(\widetilde{k}_{q,G}(x,y))= k_{q,G}(x,y)$. \\
Since $g_1, g_2 \cdots g_r$ are i.i.d samples, we can consider $\widetilde{k}_{q,G}(x,y)$ as function of $r$ random variables $(g_1, g_2, \cdots g_r)$. Denote $\widetilde{k}_{q,G}(x,y)$ as $f(g_1, g_2, \cdots g_r)$ . Now if a variable $g_p$ is changed to $g'_{p}$ then we can bound the absolute difference of the changed and the original function. For the rbf kernel, $ | k (g_px, g_j y) - k (g'_{p}x, g_j y)| \leq 1$
\begin{align*}
|f(g_1, g_2, \cdots g_p,\cdots g_r) - f(g_1, \cdots g_{p-1}, g'_{p}, g_{p+1} \cdots g_r)| &= \frac{1}{r(r-1)} \Big|\sum_{j = 1, j \neq p}^r k (g_px, g_j y) - k (g'_{p}x, g_j y)\Big| \\
&\leq \frac{1}{r(r-1)} \sum_{j = 1, j \neq p}^r |k (g_px, g_j y) - k (g'_{p}x, g_j y) |\\
&\leq \frac{(r - 1)}{r(r - 1)} = \frac{1}{r}
\end{align*}
Using bounded difference inequality
$$Pr\Big[ \big| f(g_1, g_2, \cdots g_r) - \mathbb{E}[{f(g_1,g_2 \cdots g_r)}]\big| \geq \frac{\varepsilon}{2} \Big] \leq 2\exp\Big(\frac{-r \varepsilon ^2}{2}\Big). $$

The above bound holds for a given pair $x,y$. Similar to the earlier segment for bounding the first term $A$, we use the $\varepsilon$-net covering of $X$ and Lipschitz continuity arguments to get a uniform convergence guarantee.   
Using a union bound on all pairs of centers, we have
\begin{align} \label{eq:p1}
Pr \Big[ \cup_{k,\ell = 1}^{N_X}  \Big |  {\mathbb{E}}[k(g c_{k},g' c_{\ell})]  - \frac{1}{r(r -1)} \sum_{i,j=1, i \neq j}^{r}k (g_i c_{k}, g_j c_{\ell})\Big|   > \frac{\varepsilon}{2}\Big] \leq 2 N^2_X \exp\Big(\frac{-r \varepsilon ^2}{2}\Big).
\end{align}
In order to extend the bound from the centers $c_{i}$ to all $x\in X$, we use the Lipschitz continuity argument.
Let $$h(x,y)= {k}_{q,G}(x,y) - \widetilde{k}_{q,G}(x,y).$$
Let $L_h$ denote the Lipschitz constant of $h(\cdot,\cdot)$, i.e., 
$|h(x,y) - h(c_k,c_l)| \leq L_h \lVert \left(\begin{smallmatrix} x \\ y \end{smallmatrix}\right) - \left(\begin{smallmatrix} c_k \\ c_l \end{smallmatrix}\right)\rVert$ for all $x,y,c_k,c_l \in X$.
By the definition of $\varepsilon$-net, for any $x,y\in X$, there exists a pair of centers $c_k,c_l$ such that $\lVert \left(\begin{smallmatrix} x \\ y \end{smallmatrix}\right) - \left(\begin{smallmatrix} c_k \\ c_l \end{smallmatrix}\right)\rVert < \sqrt{2} \eta$.
We will have $|h(x,y)| <\varepsilon/2$ for all $x,y$ if (i) $|h(c_k,c_l)| < \frac{\varepsilon}{4},\,\forall c_k, c_l$, and (ii) $L_h < \frac{\varepsilon}{4\sqrt{2}\eta}$.

We will again use Markov inequality to bound the Lipschitz constant of $h$. By definition, we have $L_h = \sup_{x,y} \lVert \nabla_{x,y} h (x,y)\rVert = \lVert \nabla_{x,y} h (x^*,y^*)\rVert$, where $\nabla_{x,y} h(x,y) = \left(\begin{smallmatrix} \nabla_{x} h (x,y) \\ \nabla_{y} h (x,y) \end{smallmatrix}\right)$.
We also have $\mbb{E}_{\omega\sim p} \nabla_{x,y} \widetilde{k}_{q,G}(x,y) = \nabla_{x,y} k_{q,G}(x,y)$. It follows that
\begin{align*}
\mbb{E}_{g_1,\ldots,g_r} \lVert \nabla_{x,y} h(x^*,y^*)\rVert^2 & = \mbb{E}_{g_1,\ldots,g_r} \lVert \nabla_{x,y} \widetilde{k}_{q,G}(x^*,y^*) \rVert^2 - \lVert\nabla_{x,y} k_{q,G}(x^*, y^*)\rVert^2 \\
& \leq \mbb{E}_{g_1,\ldots,g_r} \lVert \nabla_{x,y} \widetilde{k}_{q,G}(x^*,y^*) \rVert^2 \\
& =  \mbb{E}_{g_1,\ldots,g_r} \left\lVert \frac{1}{r(r-1)} \sum_{i\neq j} \nabla_{x,y} k(g_i x^*,g_j y^*) \right\rVert^2. \\
\end{align*}

Noting $T_{g_{i}}(x)=g_i x$, and $k(x,y)=\exp- \frac{1}{2\sigma^2}\Big|\Big|x-y \Big|\Big|^2 $, we have
\begin{align*}
 \nabla_{x}k(g_i x,g_j y)&=  \nabla_{x}k(T_{g_i}(x),T_{g_j}(y)) \\
 &= -\frac{1}{\sigma^2}\nabla_{x}T_{g_i}(x)(g_i x -g_j y) \exp \left(- \frac{1}{2\sigma^2}\Big|\Big|g_i x-g_j y \Big|\Big|^2 \right).
\end{align*}
\noindent Continuing

\begin{align*}
\Big|\Big|   \frac{1}{r(r-1)} \sum_{i\neq j} \nabla_{x,y} k(g_i x, g_j y)\Big|\Big| & \leq   \frac{1}{r(r-1)} \sum_{i\neq j } \Big|\Big| \nabla_{x,y} k(g_i x,g_j y) \Big|\Big| \\
& \leq   \frac{\sqrt{2}}{r(r-1)} \sup_{x} \sum_{i\neq j } \Big|\Big| \nabla_{x}k(g_i x,g_j y)\Big|\Big| \quad (\text{using symmetry of $k(\cdot,\cdot)$}) \\
&=  \frac{\sqrt{2}}{r(r-1)\sigma^2} \sup_{x} \sum_{i\neq j }k(g_i x,g_j y) \Big|\Big| \nabla_{x}T_{g_i}(x)(g_ix-g_jy)\Big|\Big|\\
&\leq  \frac{\sqrt{2}}{r(r-1)\sigma^2}   \sum_{i\neq j} k(g_i x,g_j y) || \nabla_{x}T_{g_i}(x) ||_{2} ||(g_ix-g_jy) || \\
&\leq  \frac{\sqrt{2}e^{-1/2}}{\sigma} \sup_{x\in X,g \in G} || \nabla_{x}T_{g}(x) ||_{2}  \quad (\text{using } \sup_{z \geq 0} z e^{-z^2/(2\sigma^2)} = \sigma e^{-1/2})\\
&\leq  \frac{\sqrt{2}e^{-1/2}}{\sigma}  \quad (\text{using linearity and unitariy of $T_g(\cdot)$ as before})
\end{align*}

It follows that
$$\mathbb{E} ({L_{h}}^2) \leq  \frac{2}{\sigma^2 e}.$$ 

Now using Markov inequality we have
$$\mathbb{P}\left[L_{h}> \sqrt{t} \right]\leq \frac{\mathbb{E}({L_{h}}^2)}{t},$$
Hence we have for $t=\left(\frac{\varepsilon}{4 \sqrt{2}\eta}\right)^2$ ,
$$\mathbb{P}\left[L_{h}>\frac{\varepsilon}{4 \sqrt{2}\eta} \right]\leq \frac{32\eta^2\mathbb{E}((L_{h})^2)}{\varepsilon^2}\leq  \frac{64 \eta^2}{e \sigma^2 \varepsilon^2} , $$

\noindent Hence 
$$ \text{Pr}[B \leq \varepsilon/2)] \geq 1 - 2 (N_X)^2 \exp \left(\frac{-r\varepsilon^2}{2}\right) - \frac{64\eta^2}{e\sigma^2 \varepsilon^2}.$$

\noindent \textbf{Bounding C.} 
\begin{align*}
\Big | \widetilde{k}_{q,G}(x,y) - \widehat{k}_{q,G}(x,y) \Big| &=\Big |  \frac{1}{r(r - 1)}\sum_{i,j=1, i\neq j}^r k(g_i x, g_j y) - \frac{1}{r^2}\sum_{i,j=1}^r k(g_i x, g_j y)  \Big | \\
 & = \Big |  \Big( \frac{1}{r(r-1)} - \frac{1}{r^2}\Big)\sum_{i,j=1, i\neq j}^r k(g_i x, g_j y) - \frac{1}{r^2}\sum_{ i,j = 1, i = j}^r k(g_i x, g_j y) \Big | \\
 &\leq \max \left( \frac{1}{r^2(r-1)} \sum_{i,j=1, i\neq j}^r k(g_i x, g_j y), \frac{1}{r^2}\sum_{ i,j = 1, i = j}^r k(g_i x, g_j y)\right) \quad (\text{since }k(\cdot,\cdot)\geq 0)\\
 & \leq  \frac{1}{r} \quad (\text{as Gaussian kernel $k(\cdot,\cdot)\leq 1$})
\end{align*}

Finally we have
$$\underset{x,y \in X}{\sup}\Big |\Big\langle \psi_{RF}(x),\psi_{RF}(y) \Big \rangle-  k_{q,G}(x,y)\Big| \leq A+ B+C \leq \varepsilon + \frac{1}{r} $$ 
with a probability at least $1- 2 {N_X}^2 \exp\Big( \frac{-s \varepsilon ^2}{128}\Big)- 2 {N_X}^2  \exp\big( \frac{-r \varepsilon ^2}{2}\big) - \Big(\frac{64 \eta^2 d}{\varepsilon^2 \sigma^2}\Big) - \Big(\frac{64\eta^2}{e\varepsilon^2\sigma^2}\Big)$, noting that $\sigma_p^2 = d/\sigma^2$ for the Gaussian kernel $k(x,y) = e^{-\frac{\lVert x-y\rVert^2}{2\sigma^2}}$.

Let
\begin{align*}
p  &= 1- 2 {N_X}^2 \exp\Big( \frac{-s \varepsilon ^2}{128}\Big)- 2 {N_X}^2  \exp\big( \frac{-r \varepsilon ^2}{2}\big) - \Big(\frac{64 \eta^2 d}{\varepsilon^2 \sigma^2}\Big) - \Big(\frac{64\eta^2}{e\varepsilon^2\sigma^2}\Big) \\
& = 1- 2 {\Big(\frac{2 diam(X)}{\eta}\Big)}^{2d} \exp\Big( \frac{-s \varepsilon ^2}{128}\Big)- 2 {\Big(\frac{2 diam(X)}{\eta}\Big)}^{2d}  \exp\big( \frac{-r \varepsilon ^2}{2}\big) - \Big(\frac{64 \eta^2 d}{\varepsilon^2 \sigma^2}\Big) - \Big(\frac{64\eta^2}{e\varepsilon^2\sigma^2}\Big) \\
&\geq 1- 2\eta^{-2d} \Big( \big(2 diam(X)\big)^{2d}\exp\big( \frac{-r \varepsilon ^2}{2}\big) + \big(2 diam(X)\big)^{2d} \exp\Big( \frac{-s \varepsilon ^2}{128}\Big) \Big) - \eta^2 \Big( \frac{64(d+1)}{\varepsilon^2 \sigma^2} \Big).
\end{align*}

The above probability is of the form of $1 - (\kappa_1 + \kappa_2) \eta^{-2d} - \kappa_3 \eta^2$ where $\kappa_1 = 2\big(2 diam(X)\big)^{2d}\exp\big( \frac{-r \varepsilon ^2}{2}\big)$, $\kappa_2 = 2\big(2 diam(X)\big)^{2d} \exp\Big( \frac{-s \varepsilon ^2}{128}\Big) $ and $\kappa_3 =\Big( \frac{64(d+1)}{\varepsilon^2 \sigma^2} \Big) $. Choose $\eta = \Big( \frac{\kappa_1 + \kappa_2}{\kappa_3} \Big)^{\frac{1}{2(d+1)}}$ \\
Hence $p \geq 1 - 2 (\kappa_1 + \kappa_2)^{\frac{1}{d+1}}\kappa_3^{\frac{d}{d+1}} $. \\
For given $\delta_1,\delta _2 \in (0,1)$, we conclude that for fixed constants $C_1,C_2$ , for $$r \geq  \frac{C_1 d}{\varepsilon ^2}  \log(\rm{diam}(X) /\delta_1),$$
$$s \geq  \frac{C_2d}{\varepsilon^2}\left(\log(\rm{diam}(X)/\delta_2\right),$$
we have
$$\underset{x,y \in X}{\sup}\Big |\Big\langle \psi_{RF}(x),\psi_{RF}(y) \Big \rangle- k_{q,G}(x,y)\Big| \leq \varepsilon + \frac{1}{r},$$
with probability $1- \Big( \frac{64(d+1)}{\varepsilon^2 \sigma^2} \Big)^{\frac{d}{d+1}}(\delta_1+\delta_2)^{\frac{2d}{d+1}}.$

\end{proof}

\begin{proof}[\bf Proof of Theorem 3.2]
We give here the proof of Theorem 3.2.
%

\begin{lemma}[Lemma 4 \cite{rahimi2009weighted}] - Let $\textbf{X} \ = \ \{ x_1, x_2 \cdots x_K \}$ be \textit{iid} random variables in a ball $\mathcal{H}$ of radius $M$ centered around the origin in a Hilbert space.  Denote their average by $\overline{\textbf{X}} \ = \ \frac{1}{K}\sum_{i = 1}^K x_i$. Then for any $\delta > 0$, with probability at least $1 - \delta$, 
$$\|\overline{\textbf{X}} - \mbb{E} \mathbb{\overline{\textbf{X}}}\| \leq \frac{M}{\sqrt{K}} \Big( 1 + \sqrt{2\log \frac{1}{\delta}} \Big)$$
\label{lem:empavg}
\end{lemma}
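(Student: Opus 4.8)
The plan is to decompose $\|\overline{\mathbf{X}} - \mathbb{E}\overline{\mathbf{X}}\|$ into a bound on its expectation plus a fluctuation term around that expectation, treating the real-valued map $f(x_1,\ldots,x_K) := \left\| \frac{1}{K}\sum_{i=1}^K x_i - \mathbb{E}\overline{\mathbf{X}} \right\|$ as the object to be controlled. This is the standard two-step recipe (expected-deviation bound followed by McDiarmid concentration) behind such Hilbert-space averaging inequalities.

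First I would bound the expected deviation. Centering and using independence to annihilate the cross terms,
$$\mathbb{E}\left\|\overline{\mathbf{X}} - \mathbb{E}\overline{\mathbf{X}}\right\|^2 = \frac{1}{K^2}\sum_{i=1}^K \mathbb{E}\|x_i - \mathbb{E}x_i\|^2 = \frac{1}{K}\left(\mathbb{E}\|x_1\|^2 - \|\mathbb{E}x_1\|^2\right) \le \frac{M^2}{K},$$
where the final inequality uses $\|x_1\| \le M$. Jensen's inequality then gives $\mathbb{E} f \le \sqrt{\mathbb{E} f^2} \le M/\sqrt{K}$, which is exactly the leading term of the claimed bound.

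Next I would concentrate $f$ around $\mathbb{E} f$ via McDiarmid's bounded-difference inequality. The key step is to verify the bounded-difference property for $f$: replacing a single argument $x_p$ by $x_p'$ (both in the ball of radius $M$) shifts the average by $\frac{1}{K}(x_p' - x_p)$, so the reverse triangle inequality yields
$$\left| f(\ldots,x_p,\ldots) - f(\ldots,x_p',\ldots) \right| \le \frac{1}{K}\|x_p' - x_p\| \le \frac{2M}{K} =: c_p.$$
Hence $\sum_{p=1}^K c_p^2 = 4M^2/K$, and McDiarmid gives $\Pr[f - \mathbb{E} f \ge t] \le \exp\!\left(-2t^2 / \textstyle\sum_p c_p^2\right) = \exp\!\left(-t^2 K / (2M^2)\right)$. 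Setting the right-hand side equal to $\delta$ produces $t = \frac{M}{\sqrt{K}}\sqrt{2\log\frac{1}{\delta}}$, and combining with $\mathbb{E} f \le M/\sqrt{K}$ gives, with probability at least $1-\delta$,
$$f \le \mathbb{E} f + t \le \frac{M}{\sqrt{K}}\left(1 + \sqrt{2\log\tfrac{1}{\delta}}\right),$$
as required.

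Each step is individually routine; the only point needing care is that McDiarmid's inequality must be applied to the scalar function $f$ — the norm of the vector-valued average — rather than to the average itself, so the bounded-difference constant is computed on $f$, with the reverse triangle inequality translating a single-coordinate perturbation of the vectors into a perturbation of the norm. Because both the second-moment computation and the difference bound reduce to scalar quantities, the argument is insensitive to the (possibly infinite) dimension of the Hilbert space, which is precisely what permits the clean $M/\sqrt{K}$ scaling.
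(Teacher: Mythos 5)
Your proof is correct. The paper itself does not include a proof of this lemma --- it simply defers to the cited reference (Rahimi and Recht, 2009) --- and your argument is precisely the standard one given there: bound $\mathbb{E}\,\lVert\overline{\mathbf{X}} - \mathbb{E}\overline{\mathbf{X}}\rVert$ by $M/\sqrt{K}$ via the independence/variance computation and Jensen's inequality, then apply McDiarmid's bounded-difference inequality to the scalar function $f$ with difference constants $2M/K$, which yields exactly the $\frac{M}{\sqrt{K}}\sqrt{2\log\frac{1}{\delta}}$ fluctuation term. Both the moment computation and the bounded-difference check are carried out correctly, so your write-up validly fills in the proof the paper only cites.
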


\begin{proof}
For proof, see \cite{rahimi2009weighted}.
\end{proof}

\noindent Now consider a space of functions, 

$$\mathcal{F}_p \equiv \Big \{ f(x) = \int_\Omega \alpha(\omega) \int_G \phi(gx,\omega) q(g) d\nu(g) d\omega \Big | |\alpha(\omega) | \leq C p(\omega)\Big \}, $$
and also consider another space of functions,
$$\mathcal{\hat{F}}_p \equiv \Big \{ \hat{f}(x) = \sum_{k=1}^s \alpha_k \frac{1}{r}\sum_{i=1}^{r} \phi(g_i x,\omega_k)  \Big | |\alpha_k | \leq \frac{C}{s}\Big \}, $$
\noindent where $\phi(gx,\omega) = e^{-i\langle gx,\omega\rangle}$. 
\begin{lemma}
Let $\mu$ be a measure defined on $X$, and $f^\star$  a function in  $\mathcal{F}_p$. If $\omega_1, \omega_2 \ldots \omega_s$ are iid samples from $p(\omega)$, then for $\delta_1,\delta_2 > 0$, there exists a function $\hat{f}\in \mcal{\hat{F}}_p$ such that  $$\Big\| f^\star - \hat{f}\Big \|_{\mathcal{L}_2(X,\mu)}  \leq \frac{C}{\sqrt{s}}\Big( 1 + \sqrt{2\ \log \frac{1}{\delta_1}}\Big) +   \frac{C}{\sqrt{r}}\Big(1+ \sqrt{2 \log \frac{1}{\delta_2}}\Big),$$ with probability at least $1 - \delta_1 - \delta_2$.
\end{lemma}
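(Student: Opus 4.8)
The plan is to realize $\hat f$ as a two-stage Monte Carlo estimate of $f^\star$ and to control each stage with the Hilbert-space average bound of Lemma \ref{lem:empavg}, applied in the complex Hilbert space $\mathcal{L}_2(X,\mu)$ (taking $\mu$ to be the data probability measure). Writing $\Psi(x,\omega) := \int_G \phi(gx,\omega)\,q(g)\,d\nu(g)$ and reweighting by the sampling density through $\beta(\omega) := \alpha^\star(\omega)/p(\omega)$, the constraint $|\alpha^\star(\omega)|\le C\,p(\omega)$ becomes $|\beta(\omega)|\le C$ and
\[
f^\star(x) = \int_\Omega \beta(\omega)\,\Psi(x,\omega)\,p(\omega)\,d\omega = \mathbb{E}_{\omega\sim p}\big[\beta(\omega)\,\Psi(\cdot,\omega)\big].
\]
Thus $f^\star$ is exactly the expectation of the random element $V(\omega):=\beta(\omega)\,\Psi(\cdot,\omega)\in\mathcal{L}_2(X,\mu)$, which is what lets Lemma \ref{lem:empavg} apply.

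First I would handle the outer integral over $\omega$. Introduce the intermediate estimator
\[
f_1(x) := \frac{1}{s}\sum_{k=1}^s \beta(\omega_k)\,\Psi(x,\omega_k),
\]
the empirical average of the iid elements $V(\omega_k)$. Since $|\phi|=1$ and $\int_G q\,d\nu = 1$ we have $|\Psi(x,\omega)|\le 1$, so $\|V(\omega_k)\|_{\mathcal{L}_2}\le C$. Applying Lemma \ref{lem:empavg} with radius $M=C$ and $K=s$ gives, with probability $\ge 1-\delta_1$,
\[
\big\|f_1 - f^\star\big\|_{\mathcal{L}_2(X,\mu)} \le \frac{C}{\sqrt{s}}\Big(1+\sqrt{2\log\tfrac{1}{\delta_1}}\Big).
\]

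Next I would approximate the inner integral over $G$ using the shared group samples $g_1,\dots,g_r\sim q$. Setting $\alpha_k := \beta(\omega_k)/s$ gives $|\alpha_k|\le C/s$, confirming that the resulting $\hat f$ lies in $\hat{\mathcal{F}}_p$. Define the per-sample element $W_i(x) := \frac{1}{s}\sum_{k=1}^s \beta(\omega_k)\,\phi(g_i x,\omega_k)$, so that $\hat f = \frac{1}{r}\sum_{i=1}^r W_i$. Conditioned on $\omega_1,\dots,\omega_s$, the $W_i$ are iid with conditional mean $\mathbb{E}_{g\sim q}[W_i] = \frac{1}{s}\sum_k \beta(\omega_k)\,\Psi(\cdot,\omega_k) = f_1$, and $|W_i(x)|\le \frac{1}{s}\sum_k |\beta(\omega_k)|\le C$, again using $|\phi|=1$, so $\|W_i\|_{\mathcal{L}_2}\le C$. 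A second application of Lemma \ref{lem:empavg} with $M=C$ and $K=r$ yields, with probability $\ge 1-\delta_2$,
\[
\big\|\hat f - f_1\big\|_{\mathcal{L}_2(X,\mu)} \le \frac{C}{\sqrt{r}}\Big(1+\sqrt{2\log\tfrac{1}{\delta_2}}\Big).
\]
The triangle inequality then combines the two bounds, and a union bound delivers the claimed joint probability $1-\delta_1-\delta_2$.

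The main thing to get right is the bookkeeping that legitimizes the two applications of Lemma \ref{lem:empavg}: that the radius $M=C$ in the second stage is uniform in $\omega_1,\dots,\omega_s$, so the conditional event of probability $\ge 1-\delta_2$ integrates up to an unconditional one and the union bound is valid; and that the reweighting produces coefficients satisfying exactly $|\alpha_k|\le C/s$, placing $\hat f$ in $\hat{\mathcal{F}}_p$. A minor technical point is that $\phi$ is complex-valued while Lemma \ref{lem:empavg} is stated for a real Hilbert space; one either invokes it on the complexification of $\mathcal{L}_2(X,\mu)$ or applies it separately to the real and imaginary parts, the bound $|\phi|=1$ controlling both. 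These verifications are routine once the intermediate estimator $f_1$ is in place.
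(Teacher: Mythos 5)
Your proof is correct and follows essentially the same route as the paper's: the same intermediate function (your $f_1$ is the paper's $\tilde{f}$), two applications of the Hilbert-space averaging lemma (Lemma~\ref{lem:empavg}) with radius $M=C$ --- once over the $\omega_k$ samples and once over the group samples conditioned on the $\omega_k$'s --- combined by the triangle inequality and a union bound. Your added remarks on the coefficient bookkeeping $|\alpha_k|\le C/s$, the uniformity of $M$ in the conditioning, and the complex-valued $\phi$ are points the paper treats implicitly, but they do not change the argument.
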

\begin{proof}
Consider $\psi(x ; \omega_k) = \int_G \phi(gx,\omega_k) q(g) d\nu(g)$.  Let $\tilde{f}_k = \beta_k \ \psi(.;\omega_k), k =1 \cdots s$, with $\beta_k = \frac{\alpha(\omega_k)}{p(\omega_k)}$. Hence $ \mathbb{E}_{\omega_k\sim p} \tilde{f}_k = f^\star$.\\
Define $\tilde{f}(x) = \frac{1}{s}\sum_{k=1}^s \tilde{f}_{k}$. 
Let $\hat{f}_{k}(x)=\beta_{k} \hat{\psi}(x;\omega_{k})$, where $\hat{\psi}(x;\omega_{k}) = \frac{1}{r} \sum_{i=1}^r \phi(g_i x,\omega_k)$ is the empirical estimate of $\psi(x;\omega_k)$. 
Define $\hat{f}(x)= \frac{1}{s} \sum_{k=1}^s \hat{f}_k(x) $. We have $\mbb{E}_{g_i\sim q} \hat{f}(x) = \tilde{f}(x)$. 

$$\Big\| f^\star- \hat{f}\Big \|_{\mathcal{L}_2(X,\mu)} \leq \Big\| f^\star - \tilde{f}\Big \|_{\mathcal{L}_2(X,\mu)} + \Big\| \tilde{f} - \hat{f}\Big \|_{\mathcal{L}_2(X,\mu)}$$
From Lemma $1$ of \cite{rahimi2009weighted}, with probability $1-\delta_1$, $$\Big\| f^\star - \tilde{f}\Big \|_{\mathcal{L}_2(X,\mu)} \leq \frac{C}{\sqrt{s}}\Big( 1 + \sqrt{2\ \log \frac{1}{\delta_1}}\Big).$$ 

Since $\hat{f}(x) = \frac{1}{r}\sum_{i=1}^r \sum_{k=1}^s \frac{\beta_k}{s} \phi(g_ix,\omega_k)$ and $\mbb{E}_{g_i\sim q}\hat{f}(x) = \tilde{f}(x)$ with $g_i$ iid (and $\{\omega_k\}_{k=1}^s$ fixed beforehand), we can apply Lemma~\ref{lem:empavg} with
\begin{align*}
M = \left\lVert \sum_{k=1}^s \frac{\beta_k}{s} \phi(g_ix,\omega_k) \right\rVert \leq \sum_{k=1}^s \left\lvert \frac{\beta_k}{s} \right\rvert \lVert\phi(g_ix,\omega_k) \rVert  \leq \sum_{k=1}^s \left\lvert \frac{\beta_k}{s} \right\rvert \leq C. 
\end{align*}

\noindent We conclude that with a probability at least $1-\delta_2$,
$$ \Big\| \tilde{f} - \hat{f}\Big \|_{\mathcal{L}_2(X,\mu)} \leq \frac{C}{\sqrt{r}}\Big(1+ \sqrt{2 \log \frac{1}{\delta_2}}\Big).$$


\noindent Hence, with probability at least $1-\delta_1 - \delta_2$, we have 
$$\Big\| f^\star - \hat{f}\Big \|_{\mathcal{L}_2(X,\mu)}  \leq \frac{C}{\sqrt{s}}\Big( 1 + \sqrt{2\ \log \frac{1}{\delta_1}}\Big) +   \frac{C}{\sqrt{r}}\Big(1+ \sqrt{2 \log \frac{1}{\delta_2}}\Big)$$
\end{proof}

\begin{theorem}[Estimation error \cite{rahimi2009weighted}]
Let $\mathcal{F}$ be a bounded class of functions, $\sup_{x \in X} |f(x)| \leq C$ for all $f \in \mathcal{F}$. Let $V(y_i f(x_i))$ be an $L$-Lipschitz loss. Then with probability $1-\delta $, with respect to training samples $\{ x_i,y_i \}_{i = 1,2 \cdots N}$ (iid $\sim P$), every $f$ satisfies 
$$\mcal{E}_V(f) \leq \hat{\mcal{E}}_V(f) + 4 L \mathcal{R}_N(\mathcal{F}) + \frac{2|V(0)|}{\sqrt{N}}+ LC \sqrt{\frac{1}{2N}\log \frac{1}{\delta}},$$
where $\mcal{R}_N(\mcal{F})$ is the Rademacher complexity of the class $\mcal{F}$:
$$\mcal{R}_N(\mcal{F}) = \mbb{E}_{x,\sigma} \left[\sup_{f\in \mcal{F}} \left| \frac{1}{N} \sum_{i=1}^N \sigma_i f(x_i) \right| \right],$$
and $\sigma_i$ are iid symmetric Bernoulli random variables taking value in $\lbrace -1, 1\rbrace$, with equal probability and are independent form $x_i$. 
\end{theorem}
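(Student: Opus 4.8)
The plan is to prove this as a standard uniform-deviation bound assembled from three classical ingredients: a bounded-differences concentration step, a symmetrization step, and a contraction step. Throughout I abbreviate the loss-composed function by $g_f(x,y) = V(yf(x))$, so that $\mathcal{E}_V(f) = \mathbb{E}[g_f]$ and $\hat{\mathcal{E}}_V(f) = \frac1N\sum_{i=1}^N g_f(x_i,y_i)$.

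First I would define the uniform deviation $\Phi(S) = \sup_{f\in\mathcal{F}}\big(\mathcal{E}_V(f) - \hat{\mathcal{E}}_V(f)\big)$ as a function of the $N$ i.i.d. samples $S=\{(x_i,y_i)\}$. Since $V$ is $L$-Lipschitz and $\sup_x|f(x)|\le C$ for every $f\in\mathcal{F}$, each summand $g_f(x_i,y_i)$ lies in an interval of width $O(LC)$ (using $|y_i|=1$), so replacing a single sample $(x_i,y_i)$ changes $\hat{\mathcal{E}}_V(f)$, and hence the supremum $\Phi(S)$, by at most $LC/N$. McDiarmid's bounded-differences inequality then gives, with probability $1-\delta$, $\Phi(S) \le \mathbb{E}_S[\Phi(S)] + LC\sqrt{\tfrac{1}{2N}\log\tfrac1\delta}$, which is exactly the last term of the claimed bound. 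It remains to control $\mathbb{E}_S[\Phi(S)]$.

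Second I would bound the mean by symmetrization. Introducing a ghost sample $S'$ of the same law and writing $\mathcal{E}_V(f)=\mathbb{E}_{S'}[\hat{\mathcal{E}}_V^{S'}(f)]$, Jensen's inequality pulls the expectation outside the supremum, and inserting Rademacher signs $\sigma_i$ (a sign flip leaves the distribution of $\hat{\mathcal{E}}_V^{S'}-\hat{\mathcal{E}}_V$ unchanged because the samples are i.i.d.) yields $\mathbb{E}_S[\Phi(S)] \le 2\,\mathbb{E}_{S,\sigma}\big[\sup_{f\in\mathcal{F}}\big|\tfrac1N\sum_{i=1}^N \sigma_i g_f(x_i,y_i)\big|\big]$, i.e.\ twice the empirical Rademacher complexity of the loss-composed class. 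I would then reduce this to $\mathcal{R}_N(\mathcal{F})$ by decomposing $g_f(x_i,y_i) = \big(V(y_if(x_i))-V(0)\big) + V(0)$. The constant piece contributes $2\,\mathbb{E}_\sigma\big|\tfrac1N\sum_i\sigma_iV(0)\big| \le 2|V(0)|/\sqrt{N}$ by Cauchy--Schwarz (since $\mathbb{E}_\sigma|\sum_i\sigma_i|\le\sqrt N$), producing the $\tfrac{2|V(0)|}{\sqrt N}$ term. The map $u\mapsto V(y_iu)-V(0)$ is $L$-Lipschitz and vanishes at the origin, so the Ledoux--Talagrand contraction inequality (absolute-value version) bounds the remaining Rademacher average by $2L\,\mathcal{R}_N(\mathcal{F})$; combined with the factor $2$ from symmetrization this gives $4L\,\mathcal{R}_N(\mathcal{F})$. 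Summing the three contributions delivers the stated inequality.

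The main obstacle is the contraction step: one must separate the constant offset $V(0)$ explicitly, because the contraction lemma applies only to a Lipschitz function fixing the origin, and one must track the factors of $2$ introduced both by the absolute value in the definition of $\mathcal{R}_N(\mathcal{F})$ and by symmetrization so that the constants ($4L$ on the complexity term, $2|V(0)|$, and the $LC\sqrt{1/(2N)}$ coefficient) emerge exactly as written. The concentration and symmetrization steps are otherwise routine applications of McDiarmid's inequality and the ghost-sample argument.
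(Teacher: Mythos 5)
The paper does not actually prove this theorem: its ``proof'' is the single line ``See in \cite{rahimi2009weighted}'', so your write-up must be judged against the cited lemma of Rahimi--Recht rather than against any argument in the paper itself. Your three-step proof (McDiarmid on the uniform deviation $\Phi(S)$, ghost-sample symmetrization, Ledoux--Talagrand contraction after splitting off the constant $V(0)$) is precisely the standard argument that the citation points to, and you track the delicate constants correctly where it matters most: the factor $2$ from symmetrization times the factor $2$ from the absolute-value form of the contraction inequality (your map $u \mapsto V(y_i u) - V(0)$ is indeed $L$-Lipschitz and vanishes at $0$ because $|y_i| = 1$) yields exactly $4L\,\mathcal{R}_N(\mathcal{F})$, and $\mathbb{E}_\sigma\bigl|\frac{1}{N}\sum_i \sigma_i\bigr| \le 1/\sqrt{N}$ gives exactly $2|V(0)|/\sqrt{N}$. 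Supplying this argument in full is a genuine improvement over the paper, which imports the statement wholesale.

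One constant in your concentration step does not check out as written. Since $|f(x)| \le C$ and $|y| = 1$, the argument $yf(x)$ ranges over an interval of width $2C$, so $V(yf(x))$ ranges over an interval of width $2LC$; replacing a single sample therefore changes $\hat{\mathcal{E}}_V(f)$, and hence $\Phi(S)$, by at most $2LC/N$, not $LC/N$ as you assert. McDiarmid then produces the deviation term $2LC\sqrt{\tfrac{1}{2N}\log\tfrac{1}{\delta}}$, twice the constant in the theorem statement. This is a factor-of-$2$ slip in one term rather than a structural flaw --- and the same optimistic constant appears in the lemma as quoted from the source --- but your proof as written does not deliver the stated inequality verbatim: you would need either to restate the last term with $2LC$, or to add a hypothesis forcing the range of $V(yf(x))$ to have width $LC$ (e.g., $yf(x) \in [0,C]$). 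Note that the paper only invokes this theorem inside the proof of Theorem 3.2, where all constants are absorbed into an $\mathcal{O}(\cdot)$ bound, so nothing downstream is affected by this discrepancy.
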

\begin{proof}
See in \cite{rahimi2009weighted}.
\end{proof}

Let $f \in \mathcal{F}_p$ and $\hat{f} \in \hat{\mathcal{F}}_p$ then the \emph{approximation error} is bounded as 
\begin{align*}
\mathcal{E}_{V}(\hat{f}) - \mathcal{E}_{V}(f) &\leq \mbb{E}_{(x,y)\sim P} \Big | V(y\hat{f}(x) ) -   V(y f(x) ) \Big |  \\
& \leq L \mbb{E} \big | \hat{f}(x) -  f(x) \big |   \\ 
& \leq L \sqrt{\mbb{E} \big ( \hat{f}(x) -  f(x) \big )^2 } \quad (\text{Jensen's inequaity for $\sqrt{\cdot}$ concave function})\\
& \leq LC \bigg ( \frac{1}{\sqrt{s}}\Big( 1 + \sqrt{2\ \log \frac{1}{\delta_1}}\Big) +   \frac{1}{\sqrt{r}}\Big(1+ \sqrt{2 \log \frac{1}{\delta_2}}\Big) \bigg ),
\end{align*}
with probability at least $1-\delta_1 -\delta_2$. 
Now let $f^\star_N = \argmin_{f \in \hat{\mathcal{F}}_p }\hat{\mcal{E}}_V(f)$ and $\tilde{f} = \argmin_{f \in \hat{\mathcal{F}}_p} \mcal{E}_V(f)$. We have
\begin{align*}
\mathcal{E}_{V}(f^\star_N ) - \min_{f \in \mathcal{F}_P} \mathcal{E}_{V}(f) &= \hat{\mathcal{E}_{V}}(f^\star_N ) - {\mathcal{E}_{V}}(\tilde{f} )  +{\mathcal{E}_{V}}(\tilde{f} ) - \min_{f \in \mathcal{F}_P} \mathcal{E}_{V}(f) \\
& \leq 2 \sup_{\tilde{f}\in \hat{\mathcal{F}}_P}  \Big | \mathcal{E}_{V}(\tilde{f})  - \hat{\mathcal{E}_{V}}(\tilde{f})\Big | + L \bigg ( \frac{C}{\sqrt{s}}\Big( 1 + \sqrt{2\ \log \frac{1}{\delta_1}}\Big) +   \frac{C}{\sqrt{r}}\Big(1+ \sqrt{2 \log \frac{1}{\delta_2}}\Big) \bigg ) \\
& \leq 2 \bigg( 4 L \mathcal{R}_N(\mathcal{F}) + \frac{2|V(0)|}{\sqrt{N}}+ LC \sqrt{\frac{1}{2N}\log \frac{1}{\delta}} \bigg) + \\& \qquad LC \bigg ( \frac{1}{\sqrt{s}}\Big( 1 + \sqrt{2\ \log \frac{1}{\delta_1}}\Big) +   \frac{1}{\sqrt{r}}\Big(1+ \sqrt{2 \log \frac{1}{\delta_2}}\Big) \bigg ),
\end{align*}
with probability at least $1-\delta - \delta_1 - \delta_2$. It is easy to show that $\mathcal{R}_N(\mathcal{F}) \leq \frac{C}{\sqrt{N}}$. Taking $\delta = \delta_1 = \delta_2$ yields the statement of the theorem.\\
\end{proof}

\end{document}